%% file: main.tex
\documentclass{article} 
\usepackage{iclr2026_conference,times}
\input{header}

\input{math_commands.tex}

\usepackage{hyperref}
\usepackage{url}
\usepackage{mathrsfs}
\usepackage{xcolor}

\newcommand{\methodname}{{MF-GIA}}

\title{Modality-free Graph In-context Alignment}

\author{Wei Zhuo, Siqiang Luo\\
Nanyang Technological University\\
\texttt{\{wei.zhuo, siqiang.luo\}@ntu.edu.sg}\\
}

\iclrfinalcopy
\begin{document}

\maketitle

\begin{abstract}
In-context learning (ICL) converts static encoders into task-conditioned reasoners, enabling adaptation to new data from just a few examples without updating pretrained parameters. This capability is essential for graph foundation models (GFMs) to approach LLM-level generality. Yet current GFMs struggle with cross-domain alignment, typically relying on modality-specific encoders that fail when graphs are pre-vectorized or raw data is inaccessible. In this paper, we introduce \underline{M}odality-\underline{F}ree \underline{G}raph \underline{I}n-context \underline{A}lignment (\methodname{}), a framework that makes a pretrained graph encoder promptable for few-shot prediction across heterogeneous domains without modality assumptions. \methodname{} captures domain characteristics through gradient fingerprints, which parameterize lightweight transformations that align pre-encoded features and indexed labels into unified semantic spaces. During pretraining, a dual prompt-aware attention mechanism with episodic objective learns to match queries against aligned support examples to establish prompt-based reasoning capabilities. At inference, \methodname{} performs parameter-update-free adaptation using only a few-shot support set to trigger cross-domain alignment and enable immediate prediction on unseen domains. Experiments demonstrate that \methodname{} achieves superior few-shot performance across diverse graph domains and strong generalization to unseen domains. The code is available at \url{https://github.com/JhuoW/MF-GIA}.

\end{abstract}

\input{01_Introduction}
\input{02_Preliminaries}

\input{03_Method}
\input{04_Experiments}
\input{05_Conclusion}
\input{Acks}

\bibliography{iclr2026_conference}
\bibliographystyle{iclr2026_conference}

\newpage

\appendix
\input{A01_RelatedWork}
\input{A02_Proof}
\input{A03_Algo}
\input{A04_TaskUnify}
\input{A05_ExpDetails}
\input{A06_MoreExp}

\input{A07_LLMUsage}

\end{document}

%% file: header.tex
\usepackage{hyperref}       % hyperlinks
\usepackage{url}            % simple URL typesetting
\usepackage{booktabs}       % professional-quality tables
\usepackage{amsfonts}       % blackboard math symbols
\usepackage{nicefrac}       % compact symbols for 1/2, etc.
\usepackage{microtype}      % microtypography
\usepackage{xcolor}         % colors

\usepackage{adjustbox}
\usepackage{makecell}
\usepackage{bbding} 
\usepackage{array,booktabs}       % professional-quality tables
\usepackage{verbatim}
\usepackage{threeparttable}
\usepackage{colortbl}
\usepackage{graphicx}
\usepackage{amsthm}
\usepackage{amsmath}
\usepackage{amssymb}
\usepackage{blkarray}
\usepackage{mathtools}
\usepackage{multirow}
\usepackage{bm}
\usepackage{enumitem}
\usepackage{comment}
\usepackage{listings}
\usepackage[ruled,linesnumbered]{algorithm2e}
\usepackage{cleveref}
\Crefname{figure}{Fig.}{Figs.}
\Crefname{equation}{Eq.}{Eq.}
\usepackage{picinpar}
\usepackage{wrapfig}
\usepackage[figuresright]{rotating}
\usepackage{lipsum}
\usepackage{subcaption}
\usepackage{siunitx}
\usepackage{tablefootnote}
\usepackage[referable]{threeparttablex}
\usepackage{pifont}

\newtheorem{definition}{Definition}
\newtheorem{theorem}{Theorem}[section]
\newtheorem{lemma}[theorem]{Lemma}
\newtheorem{property}{Property}

\definecolor{dark2green}{rgb}{0.1, 0.65, 0.3}
\definecolor{dark2orange}{rgb}{0.9, 0.4, 0.}
\definecolor{dark2purple}{rgb}{0.4, 0.4, 0.8}

\usepackage[textsize=tiny]{todonotes}
\newcommand{\ms}[2]{{#1\tiny{$\pm$#2}}}

\newcommand{\cmark}{\textcolor{green!60!black}{\ding{51}}} % green tick
\newcommand{\xmark}{\textcolor{red}{\ding{55}}}  

%% file: math_commands.tex
\usepackage{amsmath,amsfonts,bm}

\def\eqref#1{equation~\ref{#1}}
\def\1{\bm{1}}

\DeclareMathAlphabet{\mathsfit}{\encodingdefault}{\sfdefault}{m}{sl}
\SetMathAlphabet{\mathsfit}{bold}{\encodingdefault}{\sfdefault}{bx}{n}

%% file: 01_Introduction.tex
\section{Introduction}
\begin{wraptable}{r}{0.5\textwidth}
\vspace{-0.6cm}
\centering
\small
\caption{Comparison of methods with respect to the three main criteria of true ICL.}
\begin{adjustbox}{width=1\linewidth}
\begin{tabular}{lccc}
\toprule
Method & Post-Training Free & Domain Alignment & Modality-Free \\
\midrule
SSL-GNN        & \xmark & \xmark & \cmark \\
All in One~\citep{sun2023all} & \xmark & \xmark & \cmark \\
GPF~\citep{fang2023universal}        & \cmark & \xmark & \cmark \\
GCOPE~\citep{zhao2024all}      & \xmark & \cmark & \cmark \\
GFT~\citep{wang2024gft}        & \xmark & \cmark & \xmark \\
Prodigy~\citep{huang2023prodigy}    & \cmark & \xmark & \cmark \\
Unigraph~\citep{he2025unigraph}   & \cmark & \cmark & \xmark \\
AutoGFM~\citep{chen2025autogfm}    & \xmark & \cmark & \xmark \\
GraphAlign~\citep{hou2024graphalign} & \cmark & \cmark & \xmark \\
OFA~\citep{liu2024one}        & \cmark & \cmark & \xmark \\
GOFA~\citep{kong2025gofa}       & \cmark & \cmark & \xmark \\
\midrule
\textbf{\methodname{}} & \cmark & \cmark & \cmark \\
\bottomrule
\end{tabular}
\end{adjustbox}
\vspace{-0.3cm}
\label{tab:comparison_methods}
\end{wraptable}
The remarkable success of Large Language Models (LLMs) has fundamentally revolutionized AI, with in-context learning~\citep{brown2020language,zhang2023automatic,lu2022frozen} emerging as a pivotal capability that enables these models to adapt to new tasks through mere exposure to a few demonstration examples, without any parameter updates like fine-tuning. This paradigm shift, from task-specific fine-tuning to prompt-based adaptation, naturally sparks a profound question for the graph learning community: {\em Can we achieve similar foundation-level generality for graph-structured data?} Unlike sequential text where context flows naturally, graphs encode complex topological patterns, multi-hop dependencies, and heterogeneous node and edge attributions that demand fundamentally new approaches to demonstration selection, prompt design, and reasoning. 

Achieving true graph in-context learning demands three fundamental criteria that remain elusive in existing methods. First, \textbf{post-training-free} inference is essential for genuine ICL, where models must adapt to new tasks through demonstrations alone, without fine-tuning or learnable prompt engineering. Second, \textbf{cross-domain alignment} enables a single model to reason across diverse graph types within a unified semantic space, mirroring LLMs' domain-agnostic capabilities. Third, \textbf{modality-free} operation ensures that the model can process arbitrary pre-encoded graphs without requesting raw data, crucial for the heterogeneous domains of real-world graphs. As shown in \Cref{tab:comparison_methods}, prior approaches fall short of meeting all three criteria at once: self-supervised GNNs~\citep{you2020graph, qiu2020gcc} and GFMs like All in One~\citep{sun2023all}, GCOPE~\citep{zhao2024all}, and GFT~\citep{wang2024gft} compromise ICL through required post-training on downstream graphs, where All in One trains task-specific prompts while GCOPE and GFT require fine-tuning; GPF~\citep{fang2023universal} and Prodigy~\citep{huang2023prodigy} lack cross-domain alignment, limiting their generalization on graphs from unseen domains; recent advances like UniGraph~\citep{he2025unigraph}, GraphAlign~\citep{hou2024graphalign}, and OFA~\citep{liu2024one} achieve alignment and post-training-free inference, yet sacrifice modality freedom by requiring conversion to a single unified modality (e.g., text-attributed graphs) for alignment, making them inapplicable when raw data are inaccessible or when graphs are already pre-encoded by domain-specific pipelines. More related work is discussed in \Cref{app:rw}.

In this work, we present Modality-free Graph In-context Alignment (\methodname{}), the first GFM to achieve all three criteria for true in-context learning on graphs. Our key insight is that the interaction between a graph and a shared frozen encoder reveals its domain characteristics, which can be captured by a gradient fingerprint: a single-step parameter update that encodes how features, labels, and structure jointly influence the model. This fingerprint drives lightweight domain-conditioned transformations that align pre-encoded features and graph-local label IDs into unified semantic spaces, where related domains occupy neighboring subspaces while preserving intra-domain geometry, thereby achieving modality-free domain alignment. The aligned features and labels are then processed by Dual Prompt-Aware Attention (DPAA) optimized with an episodic objective that learns to match queries against support examples. This approach establishes prompt-based in-context reasoning that simulates the few-shot scenarios faced at test time. At inference, given a few labeled examples as prompts, \methodname{} computes the fingerprint, instantiates the aligners, and performs parameter-update-free prediction on unseen domains. Experiments across diverse benchmarks demonstrate that \methodname{} excels at few-shot node-level tasks, generalizes to entirely unseen domains without additional training, and transfers seamlessly to edge-level tasks. These results bring GFMs closer to the universal in-context learning capabilities exhibited by LLMs.

%% file: 02_Preliminaries.tex
\section{Preliminaries}
Following the ICL setup of the pioneering work Prodigy~\citep{huang2023prodigy}, we study few-shot, prompt-based node and edge classification. In this section, we formalize graph ICL as episodic classification over graphs and introduce a modality-free alignment perspective that standardizes features and labels across domains.

\subsection{Graph In-context Learning}
Let $\mathcal{G}=\{G_1, G_2, \cdots, G_M\}$ denote a collection of $M$ graphs drawn from heterogeneous domains, where each graph $G_i = (V_i, E_i, \mathbf{X}_i, \mathbf{Y}_i)$ comprises a node set $V_i$, an edge set $E_i \subseteq V_i\times V_i$, node features $\mathbf{X}_i = \{x_{i,1}, \cdots, x_{i, |V_i|}\} \in \mathbb{R}^{|V_i| \times d_i}$, and labels $\mathbf{Y}_i$. The node features $\mathbf{X}_i$ may exist in domain-specific formats (e.g., dense vectors, categorical attributes, IDs), with potentially different dimensions $d_i$. To pretrain a universal model across these graphs with a common input width, following~\citep{yu2025samgpt, zhao2024all}, we first unify feature dimensions to $d_o$ by applying SVD on each $\mathbf{X}_i \leftarrow \mathrm{SVD}(\mathbf{X}_i) \in \mathbb{R}^{|V_i| \times d_o}$. Depending on the task, $\mathbf{Y}_i$ is either a node-label vector $\mathbf{Y}^{\text{node}}_i = \{0, 1, \cdots, C_i^{\text{node}}-1\} ^{|V_i|}$ or an edge-label vector $\mathbf{Y}^{\text{edge}}_i = \{0, 1, \cdots, C_i^{\text{edge}}-1\} ^{|E_i|}$, where $C_i^{\text{node}}$ and $C_i^{\text{edge}}$ denote the number of node and edge classes in $G_i$, respectively. A universal GNN encoder $f_{\theta}: \mathbb{R}^{d_o} \rightarrow \mathbb{R}^{d}$ maps graph items (nodes or edges) to $d$-dimensional representations. For node classification, the representation of $v \in V_i$ is $h_{i,v} = f_{\theta}(v; G_i) \in \mathbb{R}^d$. For edge classification, we analogously obtain $\mathbf{h}_{i,e}=f_{\theta}(e; G_i)$ for $e \in E_i$ using endpoint features and structure as needed. We use the generic symbol $w$ to denote an item ($w=v$ or $w=e$).

\begin{figure}[t]
	\centering 
	\includegraphics[width=1\linewidth]{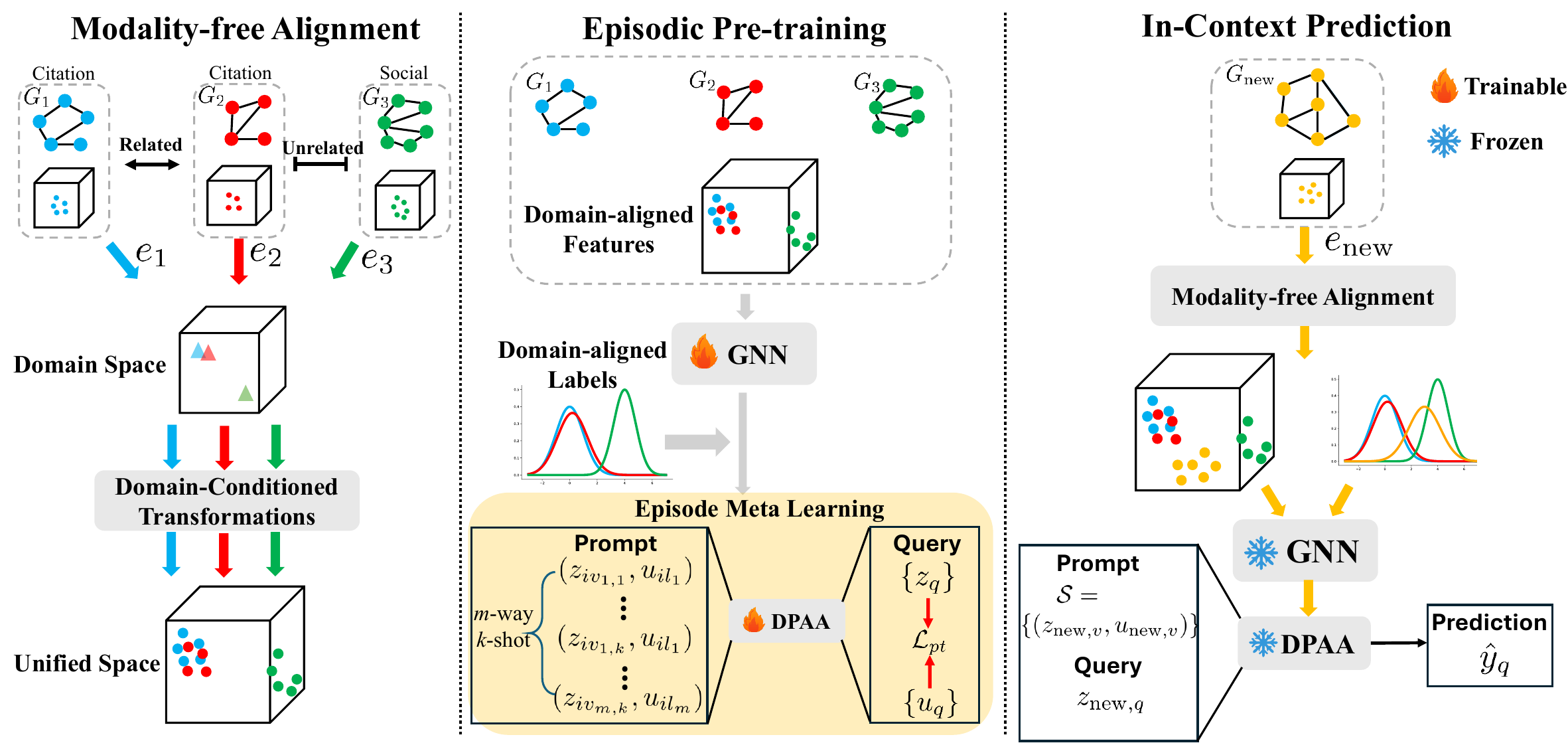}
	\caption{Overview of \methodname{}. \textbf{(Left) Modality-free Alignment:} The pretraining graphs are mapped to a unified space via domain-conditioned transformations. Domain descriptors $e$ ensure similar domains occupy neighboring subspaces. \textbf{(Middle) Episodic Pretraining:} The model learns from $m$-way $k$-shot episodes using domain-aligned features and labels. The DPAA mechanism matches queries to classes using only prompts as context. \textbf{(Right) In-context Prediction:} For an unseen graph, the frozen model performs few-shot classification using the support set as a prompt. }
	\label{fig:overall}
\end{figure}
Given $\mathcal{G}$ as a pretraining corpus with $M$ graphs and a target graph $G_{\text{new}} = (V_{\text{new}}, E_{\text{new}}, \mathbf{X}_{\text{new}}, \mathbf{Y}_{\text{new}})$ from an unseen domain with $C_{\text{new}}$ classes, {\em graph in-context learning} aims to classify graph items in $G_{\text{new}}$ using a few labeled examples per class as in-context demonstrations, without updating model parameters. Formally, the graph ICL operates in two phases. During pretraining, we learn a unified model $\mathcal{M}_{\Phi}:\mathcal{G} \rightarrow \mathcal{Y}$ on the corpus $\mathcal{G}$. At test time, given a support set $\mathcal{S} = \{(w_j, y_j)\}_{j = 1}^{k \cdot C_{\text{new}}}$ containing $k$ labeled graph items per class from $G_{\text{new}}$ as prompts, the model predicts labels for query items $\mathcal{Q} = \{q: q \in G_{\text{new}} \backslash \mathcal{S} \}$ as:
\begin{equation} \label{eq:icl_formal}
    \hat{y}_q=\mathcal{M}_{\Phi}\left(q, G_{\text {new}}, \mathcal{S}\right), \quad \forall q \in \mathcal{Q},
\end{equation}
where the pretrained model $\mathcal{M}_\Phi$ is parameterized by $\Phi$. Crucially, $\Phi$ remains frozen during inference, so the model leverages the in-context demonstrations in $\mathcal{S}$ to adapt to the new domain without fine-tuning. For example, consider $\mathcal{M}_{\Phi}$ pretrained on citation and E-commerce networks. When tested on a social network $G_{\text{new}}$ from an unseen domain, the model can classify users in $G_{\text{new}}$ without fine-tuning. Instead, we provide a support set containing a few labeled users from each class. By leveraging these in-context demonstrations as prompts, $\mathcal{M}_{\Phi}$ identifies patterns between the support examples and query users to classify the remaining users, all while keeping its parameters frozen.

\paragraph{Episodic Meta Learning. } To enable in-context adaptation, we adopt an episodic training paradigm~\citep{vinyals2016matching,li2019episodic}. Specifically, for each pretraining graph $G_i \in \mathcal{G}$, we construct $m$-way $k$-shot episodes by sampling $m$ classes and $k$ examples per class as a support set $\mathcal{S}$, with additional samples as queries $\mathcal{Q}$. The model $\mathcal{M}_\Phi$ consumes $(G_i,\mathcal{S})$ as the prompt and is optimized to maximize the likelihood of the ground-truth labels on $\mathcal{Q}$:
\begin{equation} \label{eq:epi_metal}
\min _{\Phi} \mathbb{E}[-\frac{1}{|\mathcal{Q}|} \sum_{q \in \mathcal{Q}} \log \hat{p}\left(y_q \mid q, \mathcal{S}, G_i\right)].  
\end{equation}
This episodic formulation teaches the model to recognize patterns from limited examples. By pretraining on numerous episodes that simulate the few-shot scenarios encountered at test time, the model acquires the capacity to perform in-context reasoning. At inference, this enables adaptation to new domains through few-shot prompts alone, with all pretrained parameters $\Phi$ remaining frozen.

\subsection{Modality-free Alignment}
The pretraining graphs in $\mathcal{G}$ often differ in both input modalities and label systems. Features range from dense vectors to categorical attributes or domain-specific identifiers. Likewise, label spaces are graph-local and vary in cardinality, with no global alignment across domains. These heterogeneities make direct in-context transfer across graphs challenging. 

Recent GFMs~\citep{wang2024gft,he2025unigraph,liu2024one} attempt to unify graphs from heterogeneous domains with Text-Attributed Graphs (TAGs), which convert all features and labels to natural language, then map them with language models into shared semantic spaces. However, this approach has fundamental limitations. Real-world graph data is typically already vectorized through domain-specific methods, such as word2vec~\citep{mikolov2013efficient} for documents, molecular fingerprints~\citep{rogers2010extended} for compounds, and user behavior embeddings~\citep{pan2019social}. Converting these optimized representations to text and back introduces information loss and computational overhead. Furthermore, privacy constraints often restrict access to raw data, and data providers usually release only pre-encoded datasets, making modality-aware conversions infeasible in sensitive domains. Instead, we adopt a {\em modality-free alignment} perspective, which aligns graphs directly in their existing representations without modality-aware conversion. 

\begin{definition} (Modality-free Alignment) Let $\left\{(G_i, L_i)\right\}_{i = 1}^M$ be graphs from $M$ domains, whose item features are already pre-encoded by (unknown) domain-specific pipelines, $\mathbf{X}_i \in \mathbb{R}^{d_o}$, and whose labels have been indexed by $L_i=\{0, \cdots, C_{i}-1\}$. A modality-free alignment is a domain-conditioned transformation system $\mathcal{T} = \left\{(\mathcal{K}_i^{\text{feat}}, \mathcal{K}_i^{\text{label}})\right\}_{i = 1}^M$ with:
\begin{equation}
\begin{aligned}
\mathcal{K}^{\text{feat}}_i: \mathbb{R}^{d_o} \rightarrow \mathbb{R}^d \quad \text{(feature alignment)} \quad \text{and} \quad
\mathcal{K}^{\text{label}}_i: L_i \rightarrow \mathbb{R}^d \quad \text{(label alignment)}
\end{aligned}    
\end{equation}
that maps domain-specific features and label IDs directly into a unified $d$-dimensional feature space and label space, respectively, without reconstructing or converting to any intermediate modality. The transformations should be conditioned on the domain descriptors $e_i \in \mathbb{R}^{d_{e}}$ that capture domain characteristics of $G_i$, such that for any two domain $i$, $j$, 
\begin{equation}
\|\mathcal{K}_i^{\text{feat}} - \mathcal{K}_j^{\text{feat}}\| \propto \|e_i - e_j\| \quad \text{and} \quad \|\mathcal{K}_i^{\text{label}} - \mathcal{K}_j^{\text{label}}\| \propto \|e_i - e_j\|.
\end{equation}
This ensures that similar domains with close descriptors $e_i \approx e_j$ produce similar transformations, causing their aligned features and labels to occupy neighboring subspaces in the unified space.
\end{definition}
\Cref{fig:overall}-left illustrates the idea intuitively. Modality-free alignment maps every graph into a unified semantic space according to domain relationships: graphs from related domains (e.g., two citation networks $G_1$ and $G_2$) have similar domain descriptors and thus map to neighboring subspaces, whereas unrelated domains (social network $G_3$) sit far away. The domain-conditioned transformations $\mathcal{K}_i^{\text{feat}}$ and $\mathcal{K}_i^{\text{label}}$ project each graph's pre-encoded features and indexed labels into unified feature and label spaces, preserving intra-domain semantics while enabling cross-domain transfer. This is essential because numerically similar feature vectors from different domains can carry entirely different meanings (each domain's encoder defines its own coordinate system), and indexed label IDs $[0,1,2,\cdots]$ are reused with domain-specific semantics. Modality-free alignment reconciles these differences by calibrating features and labels via the domain descriptor, unifying them in shared spaces without requiring any knowledge of the original data modality like TAGs.

%% file: 03_Method.tex
\section{MF-GIA: Modality-free Graph In-Context Alignment}
In this section, we present \methodname{} for enabling in-context learning across heterogeneous graph domains without modality-specific priors. \methodname{} addresses the fundamental challenge of aligning graphs with incompatible feature spaces and label systems through three key components: (1) domain embedder encodes domain characteristics, (2) domain-conditioned alignment maps pre-encoded features and indexed labels to unified spaces, and (3) episodic pretraining realizes few-shot adaptation during pretraining. We then describe in-context inference on graphs from unseen domains.

\subsection{Domain Embedder} \label{sec:de}
\begin{wrapfigure}{r}{0.4\textwidth}
\vspace{-0.5cm}
\centering
\includegraphics[width=1.0\linewidth]{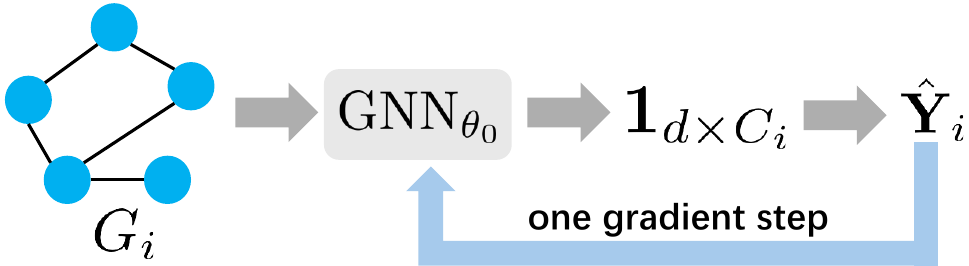}
\caption{Domain embedder.}
\label{fig:de}
\vspace{-0.3cm}
\end{wrapfigure}

Reliable domain embeddings are the pivot of \methodname{}: they summarize graphs' domain characteristics, parameterize the domain-conditioned aligners $(\mathcal{K}^{\text{feat}}_i, \mathcal{K}^{\text{label}}_i)$, and ensure that graphs with related domains are mapped to neighboring subspaces while preserving intra-/cross- domain semantics. Prior work represents graph domains using learnable tokens, but depends on external signals, such as domain labels~\citep{yu2025samgpt} or modality metadata~\citep{he2025unigraph}, which are often unavailable in practice. We instead induce the domain embeddings $\{e_i\}_{i=1}^M$ directly from each graph's intrinsic properties by capturing the interactions between the graph and a shared encoder, without external signals. Starting from a single shared weight initialization $\theta_0 \in \mathbb{R}^{d_o \times d}$ for a one-layer GNN encoder followed by a fixed all-ones projection matrix $\mathbf{1}_{d\times C_i}$ from embedding to label space as shown in \Cref{fig:de}, we take a single gradient step on each pretraining graph $G_i \in \mathcal{G}$ as $\theta_i = \theta_0 - \eta \nabla_\theta \mathcal{L}_i\left(\theta_0\right)$, where $\eta$ is a small learning rate uniformly set to 0.01 and $\mathcal{L}_i$ is the task loss w.r.t. the available labels on $G_i$. The resulting single-step displacement $\Delta \theta_i=\theta_i - \theta_0$ serves as a {\em gradient fingerprint} that captures \textbf{how the graph's features, labels, and structure jointly influence the shared encoder}. Intuitively, graphs with similar gradient patterns are likely to come from related domains, making $\Delta\theta_i$ a natural descriptor of domain-level information.
To obtain compact domain embeddings, we project these fingerprints through a learnable domain embedder $f_{\phi_{\text{de}}}: \mathbb{R}^{d_o \times d} \rightarrow \mathbb{R}^{d_e}$:
\begin{equation} \label{eq:domain_embedder}
    e_i=f_{\phi_{\text{de}}}\left(\Delta \theta_i\right)=\operatorname{MLP}\left(\operatorname {Flatten }\left(\operatorname{Conv2D}\left(\Delta \theta_i\right)\right)\right) \in \mathbb{R}^{d_e},
\end{equation}
where the fingerprint $\Delta \theta_i \in \mathbb{R}^{d_o \times d}$ is treated as a single-channel image to be embedded. The embedder $f_{\phi_{\text{de}}}$ is trained to preserve domain relationships by minimizing:
\begin{equation}
\mathcal{L}_{\text {de}}=\sum_{G_i, G_j \in \mathcal{G}}\left( \left\| \Delta \theta_i-\Delta \theta_j\right\|_F-\left\| e_i-e_j\right\|_2\right)^2, 
\label{eq:de_loss}
\end{equation}
so that pairwise relationships among graphs are retained in the embedding space. This approach naturally captures domain characteristics without domain labels or modality priors, as the gradient pattern inherently reflects the unique way each domain's data distribution interacts with the shared model initialization.

The domain embedding induced by the gradient fingerprint is central to \methodname{}, and the subsequent alignment operations are established on it. To justify its effectiveness, we provide a theoretical analysis showing that this embedding faithfully preserves domain characteristics (Proof in \Cref{app:proof_de}).

\begin{theorem}\label{thm:de}
Let $G_i$ and $G_j$ be graphs sampled from domains $\mathcal{D}_i$ and $\mathcal{D}_j$ respectively, with corresponding gradient fingerprints $\Delta \theta_i, \Delta \theta_j \in \mathbb{R}^{d_o \times d}$ computed using task loss $\mathcal{L}_i$ and $\mathcal{L}_j$ (e.g., cross-entropy). The domain embedder $f_{\phi_{\text{de}}}$ produces domain embeddings $e_i$ and $e_j$. Assuming every task loss $\mathcal{L}$ is $\mathscr{L}_{\text{task}}$-smooth with respect to model parameters, and $f_{\phi_{\text{de}}}$ has Lipschitz constant $\mathscr{L}_{\text{de}}$, the domain embeddings preserve domain relationships:
\begin{equation} \label{eq: de_sim}
\left\|e_i-e_j\right\|_2 \leq   \widetilde{C}\cdot \mathcal{W}_2\left(\mathcal{D}_i, \mathcal{D}_j\right)    
\end{equation}
where $\mathcal{W}_2(\cdot, \cdot)$ measures inherent distance between two domains, and $\widetilde{C}$ is a constant.
\end{theorem}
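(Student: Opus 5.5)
The plan is to chain three inequalities: from embeddings to fingerprints, from fingerprints to gradients, and from gradients to a Wasserstein distance between the domains.

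\textbf{Step 1 (embeddings to fingerprints).} By the Lipschitz assumption on $f_{\phi_{\text{de}}}$,
\[
\|e_i-e_j\|_2 \le \mathscr{L}_{\text{de}}\,\|\Delta\theta_i-\Delta\theta_j\|_F .
\]

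\textbf{Step 2 (fingerprints to gradients).} Both fingerprints come from the same initialization $\theta_0$ with the same step size, so $\Delta\theta_i=-\eta\nabla_\theta\mathcal{L}_i(\theta_0)$. Hence
\[
\|\Delta\theta_i-\Delta\theta_j\|_F=\eta\,\|\nabla_\theta\mathcal{L}_i(\theta_0)-\nabla_\theta\mathcal{L}_j(\theta_0)\|_F .
\]
The problem therefore reduces to comparing two gradients evaluated at the same parameter. The two losses differ only through the data distribution they are averaged over.

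\textbf{Step 3 (gradients to data distributions).} I would write each loss as an expectation over graph items. Let $z=(x,\mathcal{N}(x),y)$ denote a node feature together with its aggregated neighbourhood and its label. Then
\[
\mathcal{L}_i(\theta)=\mathbb{E}_{z\sim\mathcal{D}_i}\,\ell(\theta;z),
\]
or the empirical version over the sampled graph, treated as a proxy for $\mathcal{D}_i$. Take any coupling $\pi$ of $\mathcal{D}_i$ and $\mathcal{D}_j$. Then
\[
\nabla\mathcal{L}_i(\theta_0)-\nabla\mathcal{L}_j(\theta_0)=\mathbb{E}_{(z,z')\sim\pi}\bigl[\nabla_\theta\ell(\theta_0;z)-\nabla_\theta\ell(\theta_0;z')\bigr].
\]
The key requirement is that the per-sample gradient be Lipschitz in the data, i.e. $\|\nabla_\theta\ell(\theta_0;z)-\nabla_\theta\ell(\theta_0;z')\|\le L_z\|z-z'\|$. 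For a one-layer GNN followed by the fixed all-ones head and cross-entropy, this gradient is explicit: it is $\tilde x^\top(\mathrm{softmax}(\cdot)-y)\mathbf{1}^\top$, where $\tilde x$ is the aggregated feature. After the SVD step the features are bounded, so this map is Lipschitz in $(\tilde x,y)$. The stated $\mathscr{L}_{\text{task}}$-smoothness can be read as (or supplemented by) this joint smoothness in parameters and inputs. With it, Jensen's inequality gives
\[
\bigl\|\nabla\mathcal{L}_i-\nabla\mathcal{L}_j\bigr\|\le L_z\,\mathbb{E}_\pi\|z-z'\|\le L_z\bigl(\mathbb{E}_\pi\|z-z'\|^2\bigr)^{1/2}.
\]
Taking the infimum over couplings yields $L_z\,\mathcal{W}_2(\mathcal{D}_i,\mathcal{D}_j)$. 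The final constant is then
\[
\widetilde C=\mathscr{L}_{\text{de}}\,\eta\,L_z ,
\]
with $L_z$ expressed through $\mathscr{L}_{\text{task}}$ and the feature bounds.

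\textbf{Main obstacle: Step 3.} Two issues arise there.

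First, smoothness in $\theta$ alone does not control differences across data. I expect to make explicit a cross-Lipschitz assumption, namely Lipschitz continuity of $\nabla_\theta\ell$ in the input. Alternatively, I would derive it from the concrete architecture with bounded features, and absorb the resulting bound into $\widetilde C$ under the name $\mathscr{L}_{\text{task}}$.

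Second, labels are discrete and graph-local, so the ground metric on $z$ has to include a label component, e.g. $\|y-y'\|$ on one-hot vectors. The all-ones head helps here: it makes the gradient depend on labels only through $\mathbf{1}^\top(\mathrm{softmax}-y)$, which is bounded and Lipschitz. Differing class counts $C_i\neq C_j$ can be handled by zero-padding the one-hot vectors.

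If sampling error between the finite graph $G_i$ and $\mathcal{D}_i$ needs to be addressed, I would state the bound for the empirical measures. For the population measures I would add a concentration term, noting that it vanishes in expectation as the graph size grows.
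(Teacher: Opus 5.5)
Your proof is correct in outline, and it takes a genuinely different route through the key step.

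\textbf{The paper's route.} The paper works throughout with the fixed sampled pair. It pads both graphs to a common $n$ and writes the gradient in closed form as $\frac{1}{n}\mathbf{X}^\top\mathbf{A}\operatorname{diag}(\mathbf{g})\mathbf{1}_d$. It then splits $\mathbf{M}_i-\mathbf{M}_j$ into three telescoping terms and bounds them using $\|\mathbf{A}\|_{\mathrm{op}}\le 1$, a feature bound $B$, $|g_{i,v}|\le\sqrt{C_i}+1$, and the Lipschitzness of $\nabla_h\ell$ in $h$ composed with that of $h$ in $(\mathbf{X},\mathbf{A})$. This gives $\|\Delta\theta_i-\Delta\theta_j\|_F\le\eta\widetilde C\,d_{\mathcal G}(G_i,G_j)$ with $d_{\mathcal G}=\|\mathbf{X}_i-\mathbf{X}_j\|_F+\|\mathbf{A}_i-\mathbf{A}_j\|_F$. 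Only at the end does it pass to $\mathcal{W}_2$, via Jensen on an optimal coupling of graph distributions. What this buys is explicit constants, a separate structural term, and a data-Lipschitz property derived from the architecture rather than posited, as your $L_z$ is.

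\textbf{What your route buys.} It is rigorous exactly where the paper is weakest. The paper replaces $d_{\mathcal G}(G_i,G_j)$ for one fixed pair by $\mathbb{E}_{\kappa^*}[d_{\mathcal G}]$, and that step does not follow. Your observation is what actually licenses a Wasserstein bound: the loss gradient is an average of per-item gradients, so the gradient difference can be written as $\mathbb{E}_\pi[\cdot]$ under any coupling $\pi$. Your version also has these advantages:
\begin{itemize}
\item It is invariant to node ordering and needs no padding.
\item It carries a label term in the ground metric. The paper's step $\|g_{i,v}-g_{j,v}\|\le\mathscr{L}_\ell\|h_{i,v}-h_{j,v}\|$ silently assumes that index-matched nodes share a label.
\item You correctly note that the clean inequality holds for the empirical measures, and that population domains need an extra concentration term. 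Without it, two different graphs from the same domain would be forced to share an embedding.
\end{itemize}
The cost is that your $\mathcal{W}_2$ lives on item-level (aggregated feature, label) distributions rather than on the paper's graph-level ones.

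\textbf{Two caveats.}
\begin{itemize}
\item Your explicit gradient omits $\sigma'$. Its Lipschitzness in $\tilde x$ therefore rests on the same positive-pre-activation linearization the paper uses in its lemma. With a genuine ReLU the per-item gradient jumps and is not Lipschitz.
\item The scalar $\mathbf{1}^\top(\mathrm{softmax}-y)$ you isolate is identically zero for softmax cross-entropy with a one-hot or zero-padded target, since both vectors sum to $1$. Under the literal all-ones head every fingerprint therefore vanishes and the bound holds trivially. The argument has content only for a head or loss in which that term does not cancel.
\end{itemize}
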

This upper bound ensures that if two domains are inherently similar, their embeddings learned by $f_{\phi_{\text{de}}}$ will be close in the embedding space, while dissimilar domains produce distant embeddings.

\paragraph{In-context Domain Embedding.} For a downstream graph $G_\text{new}$ from an unseen domain, we compute its domain embedding using the same fingerprinting process. Given a few labeled items $\mathcal{S} = \{(w_i,y_i)\}_{i = 1}^{k \cdot C_{\text{new}}}$ as a $C_{\text{new}}$-way $k$-shot prompt from $G_{\text{new}}$, we perform a single gradient step from the same initialization $\theta_0$, which is a component of the pretraining model $\mathcal{M}$ ($\theta_0 \in \Phi$), as $\theta_{\text{new}} = \theta_0 - \eta\nabla_\theta \mathcal{L}_{\text{new}}(\theta_0, \mathcal{S})$, where $\mathcal{L}_{\text{new}}$ is computed using only the prompt $\mathcal{S}$. The in-context domain embedding of $G_{\text{new}}$ is then computed by passing the gradient fingerprint through the pretrained domain embedder:
\begin{equation}
    e_{\text{new}} = f_{\phi_{\text{de}}} \left(\theta_{\text{new}} - \theta_0\right).
\end{equation}
This process automatically captures $G_{\text{new}}$'s characteristics and positions it within the learned domain space. Since the domain embedder $f_{\phi_{\text{de}}}$ has been trained to preserve domain relationships during pretraining, it naturally maps the new graph's fingerprint to an appropriate location based on the knowledge learned from existing domains. 

\subsection{Domain-Conditioned Alignment}
With the domain embedding $e_i$ for $G_i \in \mathcal{G}$, \methodname{} instantiates two lightweight transformations $(\mathcal{K}^{\text{feat}}_i,\mathcal{K}^{\text{label}}_i)$ as aligners that respectively align $G_i$'s item features and graph-local label IDs into unified semantic spaces. Because the transformations are conditioned on $e_i$, related domains with nearby $e_i$ induce similar transformations and occupy neighboring subspaces after alignment, while dissimilar domains remain separated. Here, we detail the feature and label alignment mechanisms and their application during in-context inference.

\subsubsection{Feature Alignment} 
For each pretraining graph $G_i$, we learn a domain-conditioned feature transformation $\mathcal{K}^{\text{feat}}_i$ mapping pre-encoded item features to a unified feature space. Given an item $w \in G_i$ with its feature $x_w \in \mathbb{R}^{d_o}$, we first obtain its base representation via a shared GNN encoder $f_\theta$, whose first-layer weight matrix is initialized from the stored $\theta_0$:
\begin{equation}
    h_{i,w} = f_\theta(w, G_i) \in \mathbb{R}^d.
\end{equation}
Then we apply Feature-wise Linear Modulation (FiLM)~\citep{perez2018film} to generate domain-conditioned transformations from the domain embedding:
\begin{equation}
\begin{aligned}
& \left(\gamma_i^{\text {feat}}, \beta_i^{\text {feat}}\right)=f_{\phi_{\text{feat}}}\left(e_i\right), \quad \gamma_i^{\text {feat}}, \beta_i^{\text {feat}} \in \mathbb{R}^d, \\
& z_{i, w}=\mathcal{K}_i^{\text {feat}}\left(h_{i, w}\right)=\gamma_i^{\text {feat}} \odot h_{i, w}+\beta_i^{\text {feat}},
\end{aligned}
\label{eq:film_feat}
\end{equation}
where $f_{\phi_{\text{feat}}}:\mathbb{R}^{d_e} \rightarrow \mathbb{R}^{2d}$ is a two-layer MLP that outputs scale $\gamma_i^{\text {feat}}$ (with $\mathrm{SoftPlus}$ head for positivity) and shift $\beta_i^{\text {feat}}$ parameters, $\odot$ denotes element-wise product, and $z_{i, w}$ is the aligned feature for $w$ in $G_i$. The FiLM-based transformation $\mathcal{K}_i^{\text{feat}}$ is affine, so it calibrates scales and offset across domains to map features to a domain-specific subspace determined by $e_i$, while preserving the intra-domain geometry already present in $h_{i,w}$. Formally, the alignment satisfies (Proof in \Cref{app:proof_lipschitz}):
\begin{property}\label{prop:lipschitz}
    If $f_{\phi_{\text{feat}}}$ is $\mathscr{L}$-Lipschitz continuous, then $\|\gamma_i^{\text {feat}}-\gamma_j^{\text {feat}}\|_2+\|\beta_i^{\text {feat}}-\beta_j^{\text {feat}}\|_2 \leq \mathscr{L}\|e_i-e_j\|_2$ for two graph $G_i$ and $G_j$, so nearby domains yield similar feature transforms and thus neighboring subspaces in the unified feature space.
\end{property}
The domain-conditioned transformations $\{\mathcal{K}_i^{\text{feat}}\}_{i = 1}^M$ parameterized by shared $\phi_{\text{feat}}$, are learned jointly over all pre-training graphs $\mathcal{G}$ using their domain embeddings $\{e_i\}_{i = 1}^M$ to form a unified and general feature space. Together with the GNN encoder $f_\theta$, they constitute part of the pretrained model $\mathcal{M}$ (i.e., $\theta, \phi_{\text{feat}} \in \Phi$).

\paragraph{In-context Feature Alignment.} At test time, for a downstream graph $G_{\text{new}}$ with domain embedding $e_{\text{new}}$ computed via the pretrained domain embedder, we generate its alignment parameters $(\gamma_{\text{new}}^{\text{feat}}, \beta_{\text{new}}^{\text{feat}}) = f_{\phi_{\text{feat}}}(e_{\text{new}})$. For any item $w \in G_{\text{new}}$, its aligned feature is computed as:
\begin{equation}\label{eq:aligned_feat}
z_{\text{new},w} = \gamma_{\text{new}}^{\text{feat}} \odot f_\theta(w, G_{\text{new}}) + \beta_{\text{new}}^{\text{feat}}.
\end{equation}

\subsubsection{Label Alignment}
\begin{wrapfigure}{r}{0.5\textwidth}
\vspace{-1cm}
\centering
\includegraphics[width=1.0\linewidth]{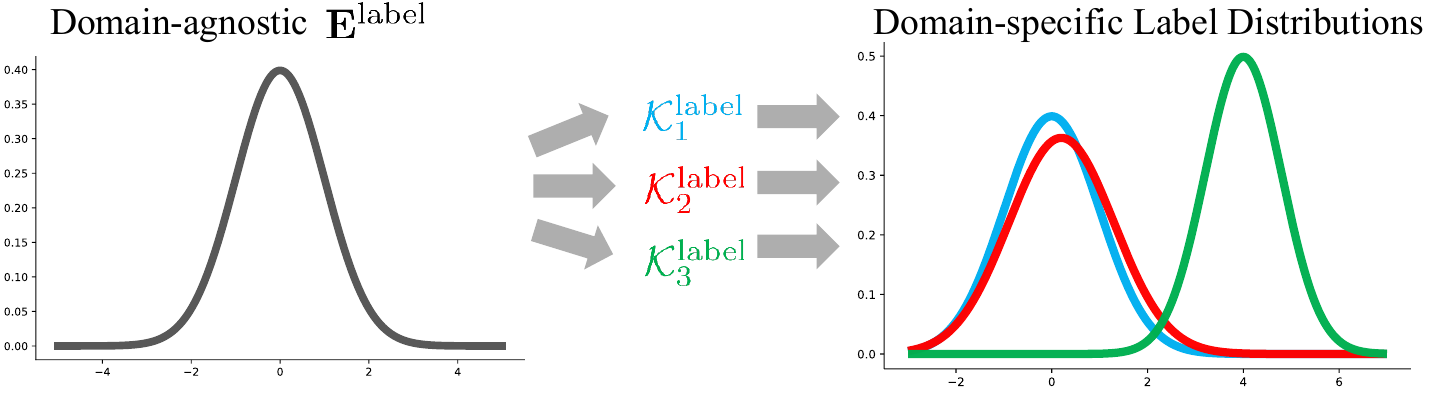}
\caption{Domain-conditioned label alignment.}
\label{fig:label_align}
\vspace{-0.4cm}
\end{wrapfigure}
Different graphs have their own label systems, leading to label IDs across domains having inconsistent semantics, such as label ID $0$ representing different concepts across graphs. To reconcile graph-local label systems, we maintain a shared label base $\mathbf{E}^{\text{label}} \in \mathbb{R}^{L_{\max} \times d}$ with $L_{\max} = \max_i C_i$, where each row $\mathbf{E}^{\text{label}}_l$ serves as a domain-agnostic label prototype for ID $l$, initialized as $\mathbf{E}^{\mathrm{label}}_l \sim \mathcal{N}(0, \mathbf{I}_d)$. Given domain embedding $e_i$ of $G_i$, we instantiate a domain-conditioned label transformation with FiLM, architecturally identical to the feature-side transformation, that maps $e_i$ to scale and shift parameters for label alignment:
\begin{equation}
\begin{aligned}
\left(\gamma_i^{\text {label}}, \beta_i^{\text {label}}\right) & =f_{\phi_{\text{label}}}\left(e_i\right), \quad \gamma_i^{\text {label}}, \beta_i^{\text {label}} \in \mathbb{R}^d, \\
u_{i, l}=\mathcal{K}_i^{\text {label}}(\mathbf{E}_{l}) & =\gamma_i^{\text {label}} \odot \mathbf{E}_{l}^{\text {label}}+\beta_i^{\text {label}}, \quad l \in L_i = \{0, \cdots, C_i-1\},
\end{aligned}
\label{eq:film_label}
\end{equation}
where $f_{\phi_{\text{label}}}: \mathbb{R}^d \rightarrow \mathbb{R}^{2d}$ is a two-layer MLP with a $\mathrm{SoftPlus}$ for $\gamma_i^{\text {label}}$. $u_{i,l}$ is the aligned label embeddings conditioned on $e_i$ for label $l$. As illustrated in \Cref{fig:label_align}, this mechanism transforms a single domain-agnostic distribution into domain-specific label distributions. The shared base $\mathbf{E}^{\text{label}}$ provides a common reference, while FiLM parameters shift and scale these prototypes based on domain characteristics, ensuring semantically distinct labels occupy different subspaces in a unified label space even when sharing the same ID. $f_{\phi_{\text{label}}}$ is a component of the pretrained model $\mathcal{M}$ ($\phi_{\text{label}} \in \Phi$).

\paragraph{In-context Label Alignment.} For a new graph $G_{\text{new}}$ with $C_\text{new}$ classes, we compute label alignments using the pretrained transformation $(\gamma_{\text{new}}^{\text{label}}, \beta_{\text{new}}^{\text{label}}) = f_{\phi_{\text{label}}}(e_{\text{new}})$. The aligned label embeddings for $G_{\text{new}}$ are:
\begin{equation}\label{eq:align_label}
\mathbf{u}_{\text{new},l} = \gamma_{\text{new}}^{\text{label}} \odot \mathbf{E}^{\text{label}}_l + \beta_{\text{new}}^{\text{label}}, \quad l \in \{0, \ldots, C_{\text{new}}-1\}
\end{equation}
which yields domain-aware label prototypes that are compatible with the unified feature space and ready for few-shot matching. 

\subsection{Episodic Pretraining}
\methodname{} is pretrained with an episodic, prompt-based objective that teaches the model to match aligned item features to aligned label prototypes, mimicking the few-shot scenarios encountered during inference. 

For each pretraining graph $G_i \in \mathcal{G}$, we construct $m$-way $k$-shot episodes to simulate in-context learning scenarios. Specifically, in each episode, we select $m$ classes and sample $k$ labeled items per class to form a support set $\mathcal{S} = \bigcup_{c=1}^m\left\{\left(w_j^{(c)}, l^{(c)}\right)\right\}_{j=1}^k $, where $w_j^{(c)}$ is the $j$-th item of the $c$-th selected class and $l^{(c)} = y_j$ is its label ID. We also sample $T$ items per class for the query set $\mathcal{Q} = \bigcup_{c=1}^m \left\{q_t^{(c)}, l^{(c)}\right\}_{t =1}^T$. Using the domain embedding $e_i$, we compute aligned item features with \Cref{eq:film_feat} and aligned label prototypes with \Cref{eq:film_label}, yielding $z_{i, w_j^{(c)}} = \mathcal{K}^{\text{feat}}_i \left(h_{i,w_j^{(c)}}\right)$, $z_{i, q_t^{(c)}} = \mathcal{K}^{\text{feat}}_i \left(h_{i,q_t^{(c)}}\right)$, and $u_{i, l^{(c)}} = \mathcal{K}^{\text{label}}_i\left(\mathbf{E}^{\text{label}}_{l^{(c)}}\right)$. The prompt-query pairs become:
\begin{equation}
\text{Prompt $\mathcal{S}$}: \left\{\left(z_{i,w_j^{(c)}}, u_{i,l^{(c)}}\right)\right\}_{c \in [m], j \in [k]}, \quad \text{Query $\mathcal{Q}$}: \left\{z_{i,q_t^{(c)}}\right\}_{c \in [m], t \in [T]}.
\end{equation}
Recalling the episodic meta learning objective in \Cref{eq:epi_metal}, which requires matching queries to classes using only the prompt, we propose a \textbf{Dual Prompt-Aware Attention} (DPAA) mechanism. It allows queries to attend to prompt examples but prevents prompts from interacting with each other, strictly following the principle of in-context learning. Specifically, let $\mathbf{Z}^{\text{pmt}} = \left[z_{i,w_1^{(1)}}, \cdots,z_{i,w_k^{(m)}} \right] \in \mathbb{R}^{mk \times d}$ be the matrix of row-stacked support features and $\mathbf{U}^{\text{pmt}} = \left[u_{i,l^{(1)}}, \cdots, u_{i,l^{(m)}}\right] \in \mathbb{R}^{m \times d}$ be the label prototype matrix. DPAA consists of two single-query attention layers, one feature-side and one label-side, both sharing the same projection matrices $\mathbf{W}_K, \mathbf{W}_V \in \mathbb{R}^{d \times d}$. For an aligned query feature $z_{i,q} \in \mathcal{Q}$ from $G_i$, the feature-side attention computes:
\begin{equation} \label{eq:feat_attn}
\begin{gathered}
\mathbf{K}^{\text {feat}}=\mathbf{Z}^{\mathrm{pmt}} \mathbf{W}_K, \quad \mathbf{V}^{\text {feat}}=\mathbf{Z}^{\mathrm{pmt}} \mathbf{W}_V, \quad \mathbf{Q}^{\text {feat}}=z_{i, q} \mathbf{W}_Q, \\
z_{i, q}^{\text {out}}=\operatorname{softmax}\left(\frac{\mathbf{Q}^{\text {feat}} (\mathbf{K}^{\text{feat}})^\top}{\sqrt{d}}\right) \mathbf{V}^{\text {feat}},
\end{gathered}    
\end{equation}
where the attended representation $z_{i, q}^{\text {out}}$ aggregates features from prompt examples relevant to the query. In other words, \Cref{eq:feat_attn} aims to use the support features $\mathbf{Z}^{\mathrm{pmt}}$ to prompt the query feature $z_{i, q}$, producing the prompt-conditioned feature $z_{i, q}^{\text {out}}$ for the query. $z_{i, q}^{\text {out}}$ is then projected to label space via a learnable function $f_{\Omega}: \mathbb{R}^d \rightarrow \mathbb{R}^d$, which is also prompted by the support set. Thus, the label-side attention lets the query interact with label prototypes:
\begin{equation}\label{eq:label_attn}
\begin{gathered}
\mathbf{K}^{\text {label}}=\mathbf{U}^{\mathrm{pmt}} \mathbf{W}_K, \quad \mathbf{V}^{\text {label}}=\mathbf{U}^{\mathrm{pmt}} \mathbf{W}_V, \quad \mathbf{Q}^{\text {label}}=f_{\Omega}(z_{i, q}^{\text {out}}), \\
u_{i, q}^{\text {out}}=\operatorname{softmax}\left(\frac{\mathbf{Q}^{\text {label}} (\mathbf{K}^{\text{label}})^\top}{\sqrt{d}}\right) \mathbf{V}^{\text {label}},
\end{gathered}    
\end{equation}
Analogous to LLMs, where prompt examples guide task completion, here $(\mathbf{Z}^{\mathrm{pmt}}, \mathbf{U}^{\mathrm{pmt}})$ serves as the few-shot demonstrations, $z_{i, q}$ as the query to be answered, and $z_{i, q}^{\text {out}}$ as the prompt-conditioned intermediate, and $u_{i, q}^{\text {out}}$ as the answer produced from the prompt for the task objective $z_{i, q}$. Thus, the pretraining objective is to build the matching between $u_{i, q}^{\text {out}}$ and the ground-truth label. The final prediction is obtained by scoring the query's prompted representation against all label prototypes:
\begin{equation}
    s_{i,q} = u_{i, q}^{\text {out}} (\mathbf{U}^{\text{pmt}})^\top \in \mathbb{R}^m,
\end{equation}
where $s_{i,q}$ contains the per-class scores for the query item $q$. For each episode from $G_i$, we minimize the cross-entropy loss over all queries in $\mathcal{Q}$:
\begin{equation}
\mathcal{L}_{\text{episode}} (G_i) = -\frac{1}{mT}\sum_{c=1}^m\sum_{t=1}^T \log \frac{\exp\left(s_{i,q_t^{(c)}}[c]/\tau\right)}{\sum_{j=1}^m \exp\left(s_{i,q_t^{(c)}}[j]/\tau\right)},
\label{eq:epi_loss_w_tau}
\end{equation}
where $\tau >0 $ is a temperature that controls the sharpness of the softmax, $s_{i,q_t^{(c)}}[c]$ denotes the score of the ground-truth class for the query $q_t^{(c)}$. The complete pretraining loss aggregates episodes across all pretraining graphs:
\begin{equation}
\mathcal{L}_{\text{pretrain}} = \mathbb{E}_{G_i \sim \mathcal{G}} \mathbb{E}_{\text{episode} \sim G_i} \left[\mathcal{L}_{\text{episode}} (G_i)\right].
\end{equation}
Note that the domain embedder $f_{\phi_{\text{de}}}$ is optimized with $\mathcal{L}_{\text{de}}$ prior to episodic pretraining and then kept fixed. Overall, the pretraining model $\mathcal{M}_\Phi$ comprises the frozen encoder initialization $f_{\theta_0}$, the domain embedder $f_{\phi_{\text{de}}}$, the domain-conditioned transformation $f_{\phi_{\text{feat}}}$ and $f_{\phi_{\text{label}}}$, the DPAA projection matrices and the projection head $f_{\Omega}$. This episodic regime trains the model to leverage prompt examples for prediction, establishing the feature-label matching capability essential for ICL on unseen domains. 

\subsection{In-context Prediction on Unseen Domains}
At test time, \methodname{} freezes all pretrained parameters. Given an unseen graph $G_{\text{new}}$ together with a $C_\text{new}$-way $k$-shot support set $\mathcal{S} = \{(w_i,y_i)\}_{i = 1}^{k \cdot C_{\text{new}}}$ as prompts, we compute the in-context domain embedding $e_{\text{new}}$ using the gradient fingerprint and the pretrained domain embedder described in \Cref{sec:de}. With the pretrained domain-conditioned transformations, any item $w \in G_\text{new}$ is mapped to its aligned feature $z_{\text{new},w}$ via \Cref{eq:aligned_feat}. For label IDs $l \in L_{\text{new}} = \{0, \cdots, C_{\text{new}}-1\}$,  the aligned label prototypes are $\{u_{\text{new}, l}\}_{l \in L_{\text{new}}}$ obtained by \Cref{eq:align_label}. We then form the prompt matrices $\mathbf{Z}^{\text{pmt}}_{\text{new}} = \left[z_{\text {new}, w_1^{(1)}}, \ldots, z_{\text {new}, w_k^{\left(C_{\text {new}}\right)}}\right] \in \mathbb{R}^{\left(k C_{\text {new}}\right) \times d}$ and $\mathbf{U}^{\text{pmt}}_{\text{new}} = \left[u_{\text {new}, l^{(1)}}, \ldots, u_{\text {new},  l^{\left(C_{\text {new}}\right)}}\right] \in \mathbb{R}^{C_{\text {new}} \times d}$. To make a prediction on a query item $q$, we apply the pretrained DPAA on it. Specifically, the feature-side attention produces the prompt-conditioned feature $z_{\text {new}, q}^{\text {out}}$ by \Cref{eq:feat_attn}, which is then fed to the label-side attention \Cref{eq:label_attn} to yield $u_{\text {new}, q}^{\text {out}}$. The final scores and prediction are:
\begin{equation}
\begin{aligned}
s_{\text{new}, q} &= u_{\text {new}, q}^{\text {out}}\left(\mathbf{U}^{\mathrm{pmt}}_{\text{new}}\right)^{\top} \in \mathbb{R}^{C_{\text {new}}}, \\
\hat{y}_q &= \arg \max_{j \in\left[C_{\text {new}}\right]} s_{\text{new}, q}[j].
\end{aligned}
\end{equation}
This inference procedure is parameter-update-free w.r.t. the pretrained model $\mathcal{M}$, and the same pipeline applies to node or edge items by letting $w$ range over $V_{\text{new}}$ or $E_{\text{new}}$. The detailed algorithms and complexity analysis of \methodname{} are provided in \Cref{app:algo}.

To fully exploit the sparse few-shot support labels and the topology of $G_{\text{new}}$ during inference, we further enhance prototype construction and prediction refinement beyond DPAA. Specifically, we construct graph-aware class prototypes by propagating soft label distributions initialized from the support set through the graph structure using label propagation, which enriches the prototypes with neighborhood context to compensate for the sparsity of the $k$-shot support. For query classification, we employ an adaptive distance metric that combines cosine similarity and inverse Euclidean distance, weighted per query by $\sigma(\mathrm{Var}(z_{\text{new},q}))$, enabling the metric to adapt to local feature geometry. Finally, predictions are iteratively refined through semi-supervised label propagation, where query pseudo-labels are updated by blending propagated and current distributions while keeping support labels fixed, enforcing graph-level consistency in the final output.

%% file: 04_Experiments.tex
\section{Experiments}
\input{Tables/node_table}
\input{Tables/edge_table}

\subsection{Experimental Setup}
We employ cross-domain graph datasets to evaluate \methodname{}, which is pretrained exclusively on node classification tasks using four datasets: WikiCS (web link), PubMed and ogbn-Arxiv (citation), and Amazon-ratings (e-commerce rating). These datasets are pre-encoded with heterogeneous feature spaces and label systems, enabling us to learn domain alignment without modality priors. 

For downstream evaluation, we test on both node-level and edge-level classification tasks across seen and unseen domains. For node classification, we evaluate on Cora (citation), and unseen domains including ogbn-Products and Computers (e-commerce product), Physics (co-authorship), and BlogCatalog (social media). For link classification, we assess performance on two knowledge graphs (KGs) from different domains to predict the relation types:  FB15K237 (encyclopedic) and WN18RR (lexical), which represent entirely new tasks not encountered in pretraining. To unify the task formulation, we transform the edge-level task into the node-level task by converting edges to nodes in a line graph, 
as detailed in \Cref{app:tu}. This task allows us to evaluate our model's generalization capability on an entirely new task and domains not seen during pretraining. More information about baseline configurations, datasets, and implementation details is provided in \Cref{app:exp_det}.

\subsection{In-context Learning Results}
\Cref{tab:icl_node} demonstrates that \methodname{} achieves state-of-the-art node classification results across diverse graph domains. Remarkably, \methodname{} reaches 63.98\% on Cora with 5-shot prompting, which is an 11.34\% absolute improvement over the second-best baseline. Across all 15 configurations, \methodname{} consistently outperforms existing methods with an average margin of 4.2\%, despite using pure prompt-based inference without any parameter updates. In contrast, methods like GFT and AutoGFM require extensive fine-tuning on target domains yet still achieve inferior results. This superiority reveals a fundamental insight: when equipped with proper cross-domain alignment, ICL beats fine-tuning. The critical role of alignment is also evident when comparing \methodname{} with Prodigy, which is a true ICL model without domain alignment. \methodname{} consistently outperforms Prodigy on unseen domains, demonstrating that domain embeddings capture domain characteristics for successful cross-domain transfer. Recent modality-dependent GFMs fail on graphs without raw text data (marked ``–"), while \methodname{} operates universally on any pre-encoded graphs. Moreover, as shown in \Cref{tab:icl_link}, \methodname{} excels at edge-level tasks, an entirely new task formulation never encountered during pretraining. It demonstrates that \methodname{} captures generalizable patterns for in-context reasoning rather than memorizing dataset/task-specific features. On WN18RR dataset with a 10-way setting, our \methodname{} does not surpass the state-of-the-art baselines GFT and AutoGFM, achieving third-best performance across all shot settings. It is because \methodname{} is pretrained exclusively on node classification tasks, while WN18RR is a dataset for edge-level tasks, which is an entirely different task formulation never encountered during pretraining. We deliberately evaluate on this dataset to assess our model's generalization capacity to unseen tasks, as we believe a genuine graph foundation model should generalize not only to unseen domains but also to unseen task types. While GFT and AutoGFM achieve superior performance on WN18RR-10way, they are pretrained on both node-level and edge-level tasks. Therefore, edge classification is not an unseen task for these baselines, so their performance advantage does not necessarily demonstrate stronger cross-task generalization. 

\begin{figure}[t]
\begin{minipage}[t]{0.65\linewidth}
\vspace{0pt}
\centering
\captionof{table}{Effect of core components.}
\vspace{-0.1in}
\label{tab:core_com}
\begin{adjustbox}{width=1\linewidth}
\begin{tabular}{lcc|cc|cc}
\toprule
\multirow{2}{*}{Method} 
& \multicolumn{2}{c}{Cora} 
& \multicolumn{2}{c}{Computers} 
& \multicolumn{2}{c}{Physics} \\
\cmidrule(lr){2-3} \cmidrule(lr){4-5} \cmidrule(lr){6-7}
& 1-shot & 5-shot & 1-shot & 5-shot & 1-shot & 5-shot \\
\midrule
GraphSAGE+FT 
& 40.50  & 42.40
& 35.89  & 40.58
& 67.12  & 77.36 \\
+ Feat. Align. 
& 42.78 & 45.26 & 37.86 & 45.97 & 75.83  & 85.29 \\
++ Label Alig.
& 43.96 & 49.16 & 37.93 & 47.00 & 76.54 & 87.66 \\
\midrule
+++ DPAA \& $\mathcal{L}_{\text{episode}}$ 
&\textbf{47.64}  & \textbf{63.98} &\textbf{41.49}  & \textbf{53.71}& \textbf{79.12}  & \textbf{88.92} \\
\bottomrule
\end{tabular}
\end{adjustbox}
\end{minipage}
\hfill
\begin{minipage}[t]{0.35\linewidth}
\vspace{0pt}
\centering
\includegraphics[width=\linewidth]{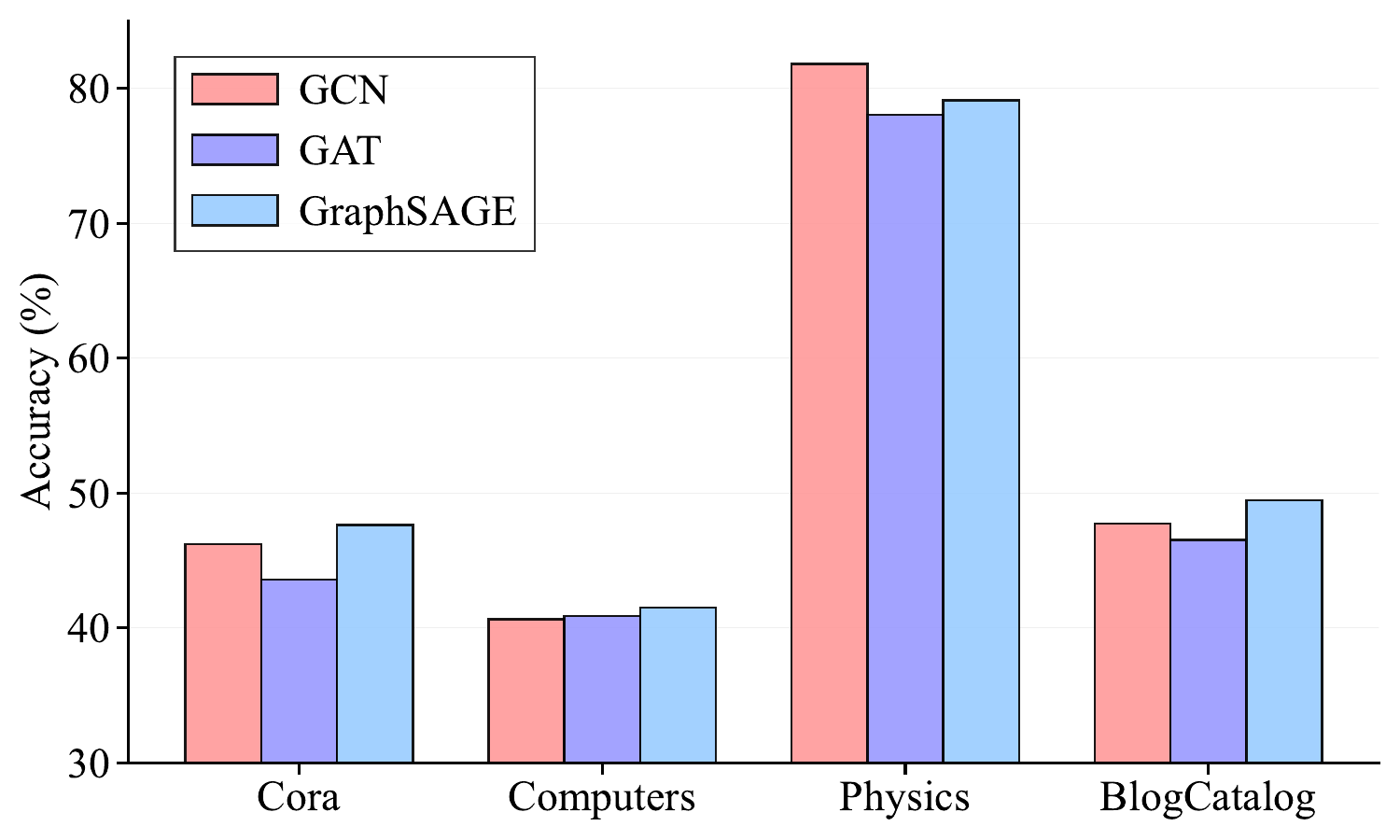}
\vspace{-0.25in}
\captionof{figure}{Backbone selections.}
\label{fig:backbone_selection}
\end{minipage}
\vspace{-0.2in}
\end{figure}

\subsection{Model Analysis}
\paragraph{Effect of Core Components.} 
We analyze the contribution of each component in \methodname{}, starting from its GraphSAGE backbone. GraphSAGE+FT is pretrained on the same datasets and fine-tuned on support sets of test graphs. Adding a domain embedder with FiLM-based feature alignment (+Feat. Align.) improves cross-domain adaptability. Extending alignment to the label space (++Label Align.) further boosts performance by unifying class indices across graphs. Finally, incorporating DPAA with an episodic objective yields the full \methodname{}, which achieves the largest gains across datasets and shots. \Cref{tab:core_com} shows a clear step-wise improvement, underscoring that both domain-conditioned alignment and prompt-aware reasoning are crucial for effective graph ICL.

\vspace{-0.1in}
\begin{wraptable}{r}{0.30\textwidth}
\vspace{-0.3cm}
\centering
\small
\caption{Effect of ICL scheme.}
\begin{adjustbox}{width=1\linewidth}
\begin{tabular}{cccccccc}
	\toprule
	& \textbf{Computers} & \textbf{Physics}\\
	\midrule
    \methodname{}$_{\text{sup}}$ & 38.73 & 75.26   \\
    \methodname{}                & 41.49 & 79.12  \\
	\bottomrule
\end{tabular}
\end{adjustbox}
\label{tab:change_rewiring}
\vspace{-0.3cm}
\end{wraptable}
\paragraph{Effect of Episodic Inference.} ICL can be achieved through two paradigms: episodic meta-learning, which unifies pretraining and inference by training the model to perform inference episodes (\methodname{} and Prodigy), and supervised pretraining with test-time prototype construction, where class prototypes are built from support sets and queries are classified by proximity (GraphAlign). As shown in Table~\ref{tab:change_rewiring}, episodic inference (\methodname{}) consistently outperforms the supervised variant (\methodname{}$_{\text{sup}}$).

\paragraph{Effect of Backbone GNNs.}  In \methodname{}, we adopt GraphSAGE as the default backbone. \Cref{fig:backbone_selection} shows \methodname{} exhibits minor accuracy fluctuations across different GNN backbones under 1-shot settings, demonstrating that \methodname{} is robust to backbone selections. More analytical results are provided in \Cref{app:more_exp}.

%% file: Tables/node_table.tex
\begin{table}[t]
\caption{Few-shot node classification accuracy (\%) with standard deviation over 10 runs. Best and second-best results are shown in \textbf{bold} and \underline{underlined}. ``--'' denotes datasets where only encoded features and indexed labels are available, making modality-dependent models inapplicable.}
\centering
\vspace{-0.1in}
\label{tab:icl_node}
\begin{adjustbox}{width=1\linewidth}
\begin{tabular}{lccc|ccc|ccc|ccc|ccc}
\toprule
\multirow{3}{*}[-0.4ex]{Method} 
  & \multicolumn{3}{c}{Cora-7 way} 
  & \multicolumn{3}{c}{ogbn-Products-47 way} 
  & \multicolumn{3}{c}{Computers-10 way} 
  & \multicolumn{3}{c}{Physics-5 way} 
  & \multicolumn{3}{c}{BlogCatalog-6 way} \\
  & \multicolumn{3}{c}{\footnotesize\emph{Citation}}
& \multicolumn{3}{c}{\footnotesize\emph{E-commerce}}
& \multicolumn{3}{c}{\footnotesize\emph{E-commerce}}
& \multicolumn{3}{c}{\footnotesize\emph{Co-authorship}}
& \multicolumn{3}{c}{\footnotesize\emph{Social Media}} \\
\cmidrule(lr){2-4} \cmidrule(lr){5-7} \cmidrule(lr){8-10} \cmidrule(lr){11-13} \cmidrule(lr){14-16}
& 1-shot & 3-shot & 5-shot 
& 1-shot & 3-shot & 5-shot 
& 1-shot & 3-shot & 5-shot 
& 1-shot & 3-shot & 5-shot
& 1-shot & 3-shot & 5-shot\\
\midrule
GCN        & \ms{43.07}{7.37} & \ms{42.38}{7.42} & \ms{42.55}{2.09} 
           & \ms{8.27}{1.48} & \ms{7.85}{1.62} & \ms{8.77}{1.71} 
           & \ms{36.42}{6.28} & \ms{39.33}{6.87} & \ms{41.09}{6.35}  
            & \ms{65.43}{5.12} & \ms{73.28}{4.44} & \ms{77.15}{3.96}
           & \ms{43.22}{3.95} & \ms{49.08}{3.02} & \ms{52.16}{2.88}      \\
GAT        & \ms{46.12}{7.10} & \ms{47.31}{7.58} & \ms{47.71}{8.66} 
           & \ms{7.14}{1.55} & \ms{7.90}{1.74} & \ms{8.39}{1.86}  
            & \ms{37.15}{6.43} & \ms{40.27}{6.92} & \ms{42.03}{6.61}  
            & \ms{66.80}{5.05} & \ms{75.22}{4.31} & \ms{78.41}{3.88}
           & \underline{\ms{46.37}{3.88}} & \ms{52.47}{3.10} & \ms{56.42}{2.76}      \\
GraphSAGE  & \ms{40.50}{6.11} & \ms{42.07}{6.12} & \ms{42.40}{6.12} 
            & \ms{7.36}{1.68} & \ms{8.59}{1.74} & \ms{9.42}{1.70}  
           & \ms{35.89}{6.34} & \ms{38.76}{6.79} & \ms{40.58}{6.41}  
           & \ms{67.12}{5.23} & \ms{71.95}{4.57} & \ms{77.36}{4.02}
           & \ms{40.56}{4.02} & \ms{53.12}{3.21} & \ms{58.03}{2.93}     \\
\midrule
GraphMAE   & \ms{42.41}{6.38} & \ms{43.36}{6.94} & \ms{44.22}{6.49}  & \ms{8.58}{1.63} & \ms{9.87}{1.69} & \ms{9.94}{1.71}  & \ms{40.86}{6.31} & \ms{42.72}{6.83} & \ms{43.35}{6.51}   & \ms{68.23}{4.89} & \ms{77.04}{3.92} & \ms{80.35}{3.51}
           & \ms{43.25}{3.71} & \ms{57.93}{2.86} & \ms{62.14}{2.64}     \\
DGI        & \ms{41.28}{6.54} & \ms{42.18}{6.75} & \ms{43.27}{6.33}   & \ms{9.04}{1.49} & \ms{10.08}{1.52} & \ms{10.87}{1.59}    & \ms{39.91}{6.22} & \ms{41.77}{6.75} & \ms{45.54}{6.39}   & \ms{66.12}{5.01} & \ms{74.83}{4.20} & \ms{78.09}{3.76} & \ms{42.11}{3.79} & \ms{56.08}{2.94} & \ms{60.91}{2.71}     \\
GraphCL    & \ms{40.22}{6.47} & \ms{44.68}{6.88} & \ms{45.56}{6.42}   & \ms{11.93}{1.65} & \ms{11.26}{1.71} & \ms{13.14}{1.77}  & \ms{38.74}{6.39} & \ms{41.55}{6.91} & \ms{43.19}{6.58}    & \ms{74.35}{4.95} & \ms{82.12}{4.05} & \underline{\ms{85.40}{3.62}}  & \ms{44.87}{3.66} & \ms{57.20}{2.90} & \underline{\ms{63.55}{2.69}}   \\
\midrule
GCOPE      & \ms{42.63}{6.33} & \ms{43.89}{6.77} & \ms{44.74}{6.36} & \ms{11.18}{1.60} & \ms{11.73}{1.68} & \ms{12.54}{1.72}   & \underline{\ms{43.02}{6.36}} & \underline{\ms{43.84}{6.87}} & \underline{\ms{47.46}{6.55}}    & \underline{\ms{76.18}{4.81}} & \underline{\ms{84.65}{3.88}} & \ms{85.07}{3.42} & \ms{45.02}{3.58} & \underline{\ms{58.76}{2.81}} & \ms{63.05}{2.61}      \\
GPF        & \ms{41.12}{6.45} & \ms{40.26}{6.84} & \ms{43.16}{6.41}   & \ms{11.12}{1.57} & \ms{12.65}{1.63} & \ms{13.43}{1.70}   & \ms{37.02}{6.28} & \ms{39.84}{6.79} & \ms{41.62}{6.48}   & \ms{69.28}{4.95} & \ms{76.91}{4.22} & \ms{83.85}{3.73}   & \ms{43.08}{3.63} & \ms{58.01}{2.83} & \ms{63.47}{2.62}      \\
All in One & \ms{42.66}{6.38} & \ms{43.92}{6.81} & \ms{44.78}{6.37}   & \ms{8.15}{1.54} & \ms{8.77}{1.61} & \ms{8.83}{1.68}   & \ms{35.64}{6.36} & \ms{40.48}{6.82} & \ms{44.07}{6.50}   & \ms{73.43}{5.10} & \ms{81.36}{4.37} & \ms{85.20}{3.85}  & \ms{42.54}{3.75} & \ms{56.72}{2.96} & \ms{61.31}{2.70}     \\
GFT        & \ms{41.40}{0.04} & \ms{43.31}{1.11} & \ms{43.55}{7.43} 
          & \ms{11.12}{1.57} & \underline{\ms{14.65}{1.63}} & \ms{15.43}{1.70}  
           &--   &--   &--  
           &--   &--   &--   
           &--   &--   &--\\
AutoGFM    & \underline{\ms{46.29}{7.24}} & \ms{47.33}{7.80} & \ms{47.76}{8.06} 
           &--  &--   &--  
           &--   &--   &--  
           &--   &--   &-- 
           &--   &--   &--\\
\midrule
Prodigy    & \ms{43.27}{6.52} &\ms{42.23}{7.65}   & \ms{44.29}{5.50}   & \ms{9.53}{1.69}  & \ms{10.89}{2.01}  & \ms{11.46}{1.74}   & \ms{40.29}{6.87}  & \ms{41.03}{7.52}   & \ms{45.82}{5.61}  & \ms{67.26}{7.33}  & \ms{71.98}{5.25}   &  \ms{79.47}{4.62} & \ms{39.85}{3.97} & \ms{46.56}{2.62} &  \ms{53.44}{2.78}   \\
OFA        & \ms{30.38}{2.39} & \ms{36.03}{2.11} & \ms{32.10}{1.79} 
           & \ms{7.42}{1.44} & \ms{7.98}{1.51} & \ms{8.66}{1.60} 
           &--   &--   &--  
           &--   &--   &-- 
           &--   &--   &-- \\
GraphAlign & \ms{44.37}{8.64} & \underline{\ms{48.96}{8.25}} & \underline{\ms{52.64}{7.53}} 
           & \underline{\ms{12.42}{1.62}} & \ms{13.07}{1.69} & \underline{\ms{15.92}{1.73}}  
           &--   &--   &--  
           &--   &--   &--
           &--   &--   &--\\

\midrule
\textbf{\methodname{}} 
           & \textbf{\ms{47.64}{8.77}} & \textbf{\ms{57.38}{9.02}} & \textbf{\ms{63.98}{7.13}} 
           & \textbf{\ms{16.86}{2.55}}   & \textbf{\ms{19.16}{2.19}}   &  \textbf{\ms{22.61}{1.71}}
           & \textbf{\ms{41.49}{7.49}} & \textbf{\ms{46.21}{14.16}} & \textbf{\ms{53.71}{3.28}} 
           &\textbf{\ms{79.12}{11.54}}  & \textbf{\ms{86.48}{0.96}}   & \textbf{\ms{88.92}{0.84}} & \textbf{\ms{49.46}{4.02}} & \textbf{\ms{62.69}{2.53}} & \textbf{\ms{67.31}{2.60}}  \\
\bottomrule
\end{tabular}
\end{adjustbox}
\vspace{-0.15in}
\end{table}

%% file: Tables/edge_table.tex
\begin{table}[t]
\caption{Few-shot edge classification accuracy (\%) with standard deviation over 20 episodes.}
\centering
\label{tab:icl_link}
\vspace{-0.1in}
\begin{adjustbox}{width=1\linewidth}
\begin{tabular}{lccc|ccc|ccc|ccc|ccc}
\toprule
\multirow{3}{*}[-0.4ex]{Method}
  & \multicolumn{3}{c}{FB15K237-5 way}
  & \multicolumn{3}{c}{FB15K237-10 way}
  & \multicolumn{3}{c}{FB15K237-40 way}
  & \multicolumn{3}{c}{WN18RR-5 way}
  & \multicolumn{3}{c}{WN18RR-10 way} \\
& \multicolumn{3}{c}{\footnotesize\emph{Encyclopedic KG}\strut}
& \multicolumn{3}{c}{\footnotesize\emph{Encyclopedic KG}\strut}
& \multicolumn{3}{c}{\footnotesize\emph{Encyclopedic KG}\strut}
& \multicolumn{3}{c}{\footnotesize\emph{Lexical KG}\strut}
& \multicolumn{3}{c}{\footnotesize\emph{Lexical KG}\strut} \\
\cmidrule(lr){2-4}\cmidrule(lr){5-7}\cmidrule(lr){8-10}\cmidrule(lr){11-13}\cmidrule(lr){14-16}
& 1-shot & 3-shot & 5-shot
& 1-shot & 3-shot & 5-shot
& 1-shot & 3-shot & 5-shot
& 1-shot & 3-shot & 5-shot
& 1-shot & 3-shot & 5-shot \\
\midrule
GCN & \ms{82.45}{1.20} & \ms{84.30}{1.05} & \ms{85.12}{0.98}
    & \ms{70.28}{3.50} & \ms{74.61}{2.85} & \ms{78.94}{2.43}
    & \ms{55.72}{0.95} & \ms{58.11}{0.78} & \ms{60.05}{0.70}
    & \ms{32.14}{2.90} & \ms{38.62}{2.35} & \ms{42.57}{2.01}
    & \ms{24.05}{1.60} & \ms{28.93}{1.42} & \ms{31.20}{1.35}\\
GraphSAGE & \ms{83.11}{1.14} & \ms{85.05}{1.01} & \ms{86.02}{0.92}
          & \ms{71.36}{3.28} & \ms{75.43}{2.71} & \ms{79.45}{2.36}
          & \ms{56.40}{0.90} & \ms{58.86}{0.74} & \ms{60.72}{0.68}
          & \ms{33.05}{2.80} & \ms{39.41}{2.26} & \ms{43.28}{1.96}
          & \ms{24.83}{1.55} & \ms{29.62}{1.39} & \ms{31.88}{1.31} \\
\midrule
DGI       & \ms{84.22}{1.08} & \ms{85.67}{0.96} & \ms{86.30}{0.88}
    & \ms{72.48}{3.10} & \ms{76.58}{2.59} & \ms{80.12}{2.21}
    & \ms{56.95}{0.88} & \ms{59.44}{0.72} & \ms{61.31}{0.65}
    & \ms{34.57}{2.75} & \ms{40.28}{2.20} & \ms{44.05}{1.92}
    & \ms{25.67}{1.50} & \ms{30.18}{1.34} & \ms{32.41}{1.28}  \\
GraphMAE & \ms{84.90}{1.02} & \ms{86.12}{0.90} & \underline{\ms{88.05}{0.83}}
         & \ms{73.35}{2.98} & \ms{79.02}{2.47} & \ms{83.66}{2.14}
         & \underline{\ms{57.43}{0.85}} & \ms{59.96}{0.70} & \underline{\ms{61.82}{0.63}}
         & \ms{35.42}{2.70} & \ms{41.16}{2.13} & \ms{44.83}{1.86}
         & \ms{26.31}{1.47} & \ms{30.71}{1.32} & \ms{32.95}{1.25}\\
\midrule
GCOPE     & \ms{80.12}{1.08} & \ms{81.56}{0.96} & \ms{82.41}{0.90}
      & \ms{68.03}{2.85} & \ms{71.22}{2.43} & \ms{74.18}{2.05}
      & \ms{51.08}{0.97} & \ms{53.12}{0.85} & \ms{56.74}{0.79}
      & \ms{30.47}{3.10} & \ms{36.05}{2.62} & \ms{40.21}{2.34}
      & \ms{22.18}{1.72} & \ms{26.73}{1.55} & \ms{29.31}{1.42}\\
Prodigy   & \ms{87.59}{0.84} & \underline{\ms{88.02}{0.48}} & \underline{\ms{88.05}{0.68}} & \ms{66.10}{9.89} & \underline{\ms{79.61}{8.28}} & \underline{\ms{84.30}{7.80}} & \ms{54.30}{0.69} & \ms{59.58}{0.22} & \textbf{\ms{62.03}{0.59}}  & \ms{46.57}{6.63} & \ms{47.28}{4.06} & \ms{53.94}{4.88} & \ms{27.01}{2.58} & \ms{28.46}{3.77} & \ms{33.54}{4.29}\\
GFT & \underline{\ms{87.67}{0.89}} & \ms{86.00}{1.84} & \ms{86.27}{1.10} & \underline{\ms{79.17}{1.76}} & \ms{79.13}{1.57} &\ms{78.83}{1.80} & \textbf{\ms{60.79}{1.41}} & \textbf{\ms{61.48}{1.32}} & \ms{61.12}{1.64} & \ms{48.13}{4.37} & \ms{48.53}{3.68} & \ms{48.80}{3.61} & \underline{\ms{35.33}{4.20}} & \underline{\ms{35.50}{5.02}} & \underline{\ms{35.50}{4.59}}\\
AutoGFM & -- & -- & -- & -- & -- & -- & -- & -- & -- & \underline{\ms{48.47}{4.38}} & \underline{\ms{49.10}{3.31}} & \ms{49.93}{3.63} & \textbf{\ms{39.34}{3.03}} & \textbf{\ms{39.55}{2.46}} & \textbf{\ms{40.02}{2.26}}\\
GraphAlign &\ms{83.02}{1.28} &\ms{83.15}{1.07} &\ms{84.92}{0.98} &\ms{73.25}{3.05} &\ms{76.14}{2.58} &\ms{77.02}{2.20} & \ms{53.10}{8.32}  & \ms{54.26}{3.93}  & \ms{59.35}{6.70} &\ms{45.08}{10.55} & \ms{47.47}{9.88} & \underline{\ms{60.19}{10.31}}& \ms{27.80}{3.05} & \ms{30.65}{2.82} & \ms{32.10}{2.60} \\ 
\midrule
\textbf{\methodname{}} & \textbf{\ms{98.77}{1.03}} & \textbf{\ms{99.42}{0.38}} & \textbf{\ms{99.64}{0.20}} & \textbf{\ms{87.57}{4.03}} & \textbf{\ms{91.24}{1.27}} & \textbf{\ms{91.38}{1.01}}  & \ms{57.17}{3.79} & \underline{\ms{61.18}{3.66}} & \textbf{\ms{62.03}{3.79}} &  \textbf{\ms{55.64}{10.30}} & \textbf{\ms{64.54}{6.42}}  & \textbf{\ms{68.05}{4.39}} & \ms{28.87}{3.89} & \ms{33.12}{4.67} & \ms{35.12}{3.80}\\
\bottomrule
\end{tabular}
\end{adjustbox}
\vspace{-0.15in}
\end{table}

%% file: 05_Conclusion.tex
\section{Conclusion}
We introduced \methodname{}, a pretraining framework for graph neural networks that enables in-context learning across heterogeneous domains without relying on modality assumptions. By capturing domain characteristics via gradient fingerprints and aligning pre-encoded features and graph-local labels through domain-conditioned transformations, \methodname{} supports parameter-update-free adaptation from few-shot prompts. This design overcomes key limitations of existing GFMs by removing the need for post-training fine-tuning and modality-specific conversions. Experiments demonstrate strong performance on both seen and unseen domains, with seamless transfer to new tasks.

\paragraph{Future Directions.} Beyond our current design, \methodname{} opens up several promising avenues for future work. One direction is to couple our gradient fingerprints with LLMs to generate semantic domain descriptions, enabling human-interpretable summaries of latent domain characteristics and more transparent cross-domain reasoning. Another is to leverage these fingerprints to automatically discover latent domain structure from large unlabeled graph collections, moving from manually curated domains to data-driven domain decomposition. We believe these extensions will further enhance the interpretability, automation, and scalability of modality-free graph foundation models.

%% file: Acks.tex
\section*{Acknowledgements}
This research is supported by the National Research Foundation, Singapore under its Frontier CRP Grant (NRF-F-CRP-2024-0005). Any opinions, findings, and conclusions or recommendations expressed in this material are those of the author(s) and do not reflect the views of the National Research Foundation, Singapore.

%% file: A01_RelatedWork.tex
\section{Related Work} \label{app:rw}
\paragraph{In-context Learning.}
The modern paradigm of in-context learning (ICL) emerged with GPT-3~\citep{brown2020language}, which demonstrated that large autoregressive transformers can adapt to new tasks using only a few labeled demonstrations, without any parameter updates. This breakthrough sparked extensive research along multiple dimensions. From a theoretical perspective, subsequent work has clarified the fundamental mechanisms underlying ICL. Several studies establish connections between ICL and classical meta learning frameworks, drawing parallels to metric-based few-shot methods, including Matching Networks~\citep{vinyals2016matching}, Prototypical Networks~\citep{snell2017prototypical}, and Model-Agnostic Meta Learning~\citep{finn2017model}. More recent theoretical analyses reveal that transformers can implement gradient descent algorithms within their forward pass~\citep{von2023transformers,ren2024towards,zhang2024trained}, effectively learning to optimize in-context. Complementary work by~\citep{garg2022can} demonstrates that transformers can learn entire function classes from context, providing an alternative computational perspective on ICL capabilities. On the methodological front, researchers have developed techniques to enhance ICL performance through improved prompt engineering. \citet{min2022metaicl} introduce MetaICL, which explicitly trains models to perform in-context learning. Practical advances focus on demonstration selection and ordering: retrieval-based methods identify optimal examples~\citep{rubin2022learning,luo2024incontext}, while active selection strategies iteratively refine the demonstration set~\citep{zhang2022active, qin2024context}. The sensitivity of ICL to prompt construction has motivated calibration techniques~\citep{zhao2021calibrate} and continuous prompt optimization~\citep{lester2021power}, with recent work revealing substantial impacts from demonstration ordering~\citep{guo2024makes}. However, extending ICL to graph-structured data presents unique challenges due to the heterogeneous nature of graph domains and the complex interplay between topology, features, and labels.

\paragraph{Graph Foundation Models.} Graph Foundation Models (GFMs) have evolved from early self-supervised methods like GraphCL~\citep{you2020graph} and GraphMAE~\citep{hou2022graphmae} toward comprehensive cross-domain and cross-task generalization. This evolution follows three primary research directions. First, \textbf{cross-domain unification} is achieved by using text-attributed graphs (TAGs) as a universal modality. Specifically, OFA~\citep{liu2024one}, GOFA~\citep{kong2025gofa}, and UniGraph~\citep{he2025unigraph} convert graphs to textual representations, then adopt LLMs and GNNs to learn semantic and structural information, respectively. UniGLM~\citep{fang2025uniglm} trains unified language models over multiple TAGs, GraphCLIP~\citep{zhu2025graphclip} aligns graph summaries with language via contrastive learning, and~\citep{wang2024gft} introduces transferable tree vocabularies. AutoGFM ~\citep{chen2025autogfm} studies the automatically adapting architectures to different TAGs. In contrast, text-free methods avoid modality conversion. For example, SAMGPT~\citep{yu2025samgpt} employs learnable domain tokens for domain alignment, and GCOPE~\citep{zhao2024all} connects disparate graphs with virtual nodes to enable cross-domain pretraining. Second, for \textbf{prompt-based adaptation}, GraphPrompt~\citep{liu2023graphprompt} and All in One~\citep{sun2023all} unify pretraining and downstream tasks via learnable prompts; Prodigy~\citep{huang2023prodigy} enables graph ICL with prompt graphs and shows few-zero transfer to unseen graphs; ARC~\citep{liu2024arc} achieves generalist anomaly detection via contextual cues. Knowledge graph models~\citep{wang2024towards} have particularly benefited from this paradigm. For example,  ULTRA~\citep{galkin2024towards} learns universal relational representations, KG-ICL~\citep{cui2024prompt} frames reasoning as prompting, and theoretical analysis~\citep{huang2025how} links expressivity to learned relation motifs. Third, for \textbf{multimodal extensions}, emerging work extends GFMs beyond single modalities. UniGraph2~\citep{he2025unigraph2} unifies text and visual features with graph structure, while Graph World Model~\citep{feng2025graph} integrates graph-structured states for planning and control. These advances establish a trajectory toward GFMs that function as universal encoders and reasoners across domains, tasks, and modalities. 

Our \methodname{} framework is particularly suited to privacy-constrained settings where schemas and modalities differ across organizations but raw features cannot be shared, such as cross-organization fraud detection over transaction graphs from different banks and multi-hospital patient networks with heterogeneous EHR systems. In such scenarios, each party can locally encode its graph, while \methodname{} uses gradient fingerprints and lightweight domain aligners to align these heterogeneous domains without any modality assumptions, such as TAGs. This complements recent trends in GFMs toward realistic cross-domain deployment, including text-based cross-domain models~\citep{chen2024text} and graph prompt optimization frameworks like HGMP~\citep{jiao2025hgmp}, all of which highlight the growing need for privacy-preserving, modality-agnostic foundation models in practical applications. 

%% file: A02_Proof.tex
\section{Proofs}
\subsection{Proof of \Cref{thm:de}}\label{app:proof_de}
The theorem shows that the domain embedder $f_{\phi_{\text{de}}}$ acts as a distance-preserving map from the domain space to the embedding space. To prove it, we first give a formal definition of the graph domain and graph distance. 
\begin{definition} [Graph Domain]
A graph domain $\mathcal{D}_i$ is characterized by a joint distribution $P_i(G,\mathbf{Y})$ over graphs $G=(V,E,\mathbf{X})$ and labels $\mathbf{Y}$, where the feature distribution $P_i(\mathbf{X})$, label distribution $P_i(\mathbf{Y})$, and structure distribution $P_i(E|V)$ jointly define the domain characteristics.
\label{def:graph_dom}
\end{definition}

\begin{definition}[Graph Distance]
For two graphs $G_i = (V_i, E_i,X_i)$ and $G_j= (V_j, E_j,X_j)$ with normalized adjacency matrices $A_i$ and $A_j$, we define the graph distance as $d_{G}\left(G_i, G_j\right)=\left\|X_i-X_j\right\|_F+\left\|A_i-A_j\right\|_F$ , where we assume $|V_i| = |V_j|$ (padding with isolated nodes if necessary for comparison).
\label{def:graph_dis}
\end{definition}

Then we define the domain distance used in \Cref{eq: de_sim} to measure the inherent similarity between domains.
\begin{definition}[Domain Distance]
The distance between two domains $\mathcal{D}_i$ and $\mathcal{D}_j$ is measured by the Wasserstein distance:
\begin{equation}
\mathcal{W}_2\left(\mathcal{D}_i, \mathcal{D}_j\right)=\inf _{\kappa \in \Gamma\left(P_i, P_j\right)}\left(\mathbb{E}_{\left(G_i, G_j\right) \sim \kappa}\left[d_{\mathcal{G}}^2(G_i, G_j)\right]\right)^{1 / 2},
\label{def:dom_dis}
\end{equation}
where $\Gamma\left(P_i, P_j\right)$ denotes all joint distributions with marginals $P_i$ and $P_j$, $d_{\mathcal{G}}(G_i, G_j)$ is a graph distance metric that captures both feature and structural differences between graphs.
\end{definition}
Before proving this theorem, we establish a technical lemma that characterizes the properties of gradient fingerprints.
\begin{lemma}\label{lemma:fp_prop}
For a graph $G = (V,E,\mathbf{X},\mathbf{Y})$ and the one-layer GNN encoder initialization $\theta_0 \in \mathbb{R}^{d_o \times d}$, the gradient of the task loss at $\theta_0$ can be decomposed as:
\begin{equation}
\nabla_\theta \mathcal{L}\left(\theta_0, G\right)=\frac{1}{|V|} \mathbf{X}^{\top} \mathbf{A} \cdot \operatorname{diag}(\mathbf{g}) \cdot \mathbf{1}_d,    
\end{equation}
where $\mathbf{A}$ is the normalized adjacency matrix, $\mathbf{g} = \left[g_1, \cdots, g_{|V|}\right]^\top \in \mathbb{R}^{|V|}$ is a vector of per-node loss gradients with $g_v=\nabla_{h_v} \ell\left(h_v, y_v\right)$, $\ell$ is the node-level loss function, and $\mathbf{1}_d$ is the all-ones vector.
\end{lemma}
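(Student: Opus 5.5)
The plan is a direct chain-rule computation, after first fixing the forward pass that the fingerprint uses. As in \Cref{sec:de}, the one-layer encoder at initialization is $\mathbf{H}=\mathbf{A}\mathbf{X}\theta_0\in\mathbb{R}^{|V|\times d}$, with rows $h_v$. This is followed by the fixed all-ones projection, giving logits $\mathbf{O}=\mathbf{H}\,\mathbf{1}_{d\times C}$. The loss is the node average $\mathcal{L}(\theta_0,G)=\frac{1}{|V|}\sum_{v}\ell(h_v,y_v)$, where $\ell$ already includes the projection. If a pointwise nonlinearity $\sigma$ is used, I will absorb its derivative $\sigma'$ into the per-node factor below, so the argument is written for the linear case.

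\textbf{Step 1: differentiate through the linear layer.} The map $\theta\mapsto\mathbf{A}\mathbf{X}\theta$ is linear. By the standard matrix chain rule,
\[
\nabla_\theta\mathcal{L}=\frac{1}{|V|}(\mathbf{A}\mathbf{X})^\top\mathbf{G}_H=\frac{1}{|V|}\mathbf{X}^\top\mathbf{A}\,\mathbf{G}_H ,
\]
where the $v$-th row of $\mathbf{G}_H$ is $\nabla_{h_v}\ell(h_v,y_v)$. The second equality uses that the normalized adjacency $\mathbf{A}$ is symmetric, so $\mathbf{A}^\top=\mathbf{A}$.

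\textbf{Step 2: use the all-ones projection to collapse $\mathbf{G}_H$.} Every logit of node $v$ equals $\mathbf{1}_d^\top h_v$. Hence
\[
\nabla_{h_v}\ell=\mathbf{1}_{d\times C}\,\nabla_{o_v}\ell=\Bigl(\sum_{c}\partial\ell/\partial o_{v,c}\Bigr)\mathbf{1}_d .
\]
This says the per-node gradient is a single scalar times the all-ones direction. I will define that scalar as $g_v$, which is what the lemma's notation $g_v=\nabla_{h_v}\ell$ stands for once the common direction $\mathbf{1}_d$ is factored out. Stacking over nodes gives $\mathbf{G}_H=\operatorname{diag}(\mathbf{g})\,\mathbf{1}_{|V|}\mathbf{1}_d^\top$. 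Substituting into Step 1 yields the stated form $\frac{1}{|V|}\mathbf{X}^\top\mathbf{A}\operatorname{diag}(\mathbf{g})\mathbf{1}_d$, with the trailing all-ones factor read as the $|V|\times d$ all-ones matrix $\mathbf{1}_{|V|}\mathbf{1}_d^\top$. I will state this reading explicitly so that the dimensions $d_o\times d$ match.

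\textbf{Main obstacle.} The real difficulty is interpretive rather than computational: making the shapes and the meaning of $g_v$ consistent with the statement. A second issue is the choice of loss. For softmax cross-entropy with identical logits, $\sum_c(p_{v,c}-[y_v=c])=0$, so $g_v$ would vanish. I would therefore state the lemma for a generic differentiable node loss $\ell$, for which $g_v$ is the aggregated logit derivative. I would also remark that in practice the bias terms, label-dependent losses, or the nonlinearity factor $\sigma'$ keep $g_v$ nonzero. The decomposition itself holds for any such $\ell$, and it isolates the ingredients the proof of \Cref{thm:de} will use: features $\mathbf{X}$, structure $\mathbf{A}$, and labels through $\mathbf{g}$.
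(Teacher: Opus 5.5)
Your core computation follows the paper's route: differentiate through the linear map $\theta\mapsto\mathbf{A}\mathbf{X}\theta$ to get $\frac{1}{|V|}\mathbf{X}^\top\mathbf{A}^\top\mathbf{G}_H$, then use $\mathbf{A}^\top=\mathbf{A}$. For a linear encoder the argument is complete. Your Step 2 also adds rigor the paper lacks. The paper never uses the all-ones projection in this proof. It writes the vector $g_v=\nabla_{h_v}\ell$ inside $\operatorname{diag}(\cdot)$, and it leaves $\mathbf{X}^\top\mathbf{A}\operatorname{diag}(\mathbf{g})\mathbf{1}_d$ dimensionally inconsistent. Its later remark that $g_{i,v}$ is ``a scalar after projection'' is exactly what your identity $\nabla_{h_v}\ell=\bigl(\sum_c\partial\ell/\partial o_{v,c}\bigr)\mathbf{1}_d$ proves. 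Reading the trailing factor as $\mathbf{1}_{|V|}\mathbf{1}_d^\top$ is the only reading that type-checks.

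One step of your plan would fail: you cannot absorb $\sigma'$ into the scalar $g_v$. Set $\mathbf{P}=\mathbf{A}\mathbf{X}\theta$ and $\mathbf{H}=\sigma(\mathbf{P})$. The gradient with respect to $\mathbf{P}$ is then $\mathbf{G}_H\odot\sigma'(\mathbf{P})=\operatorname{diag}(\mathbf{g})\,\sigma'(\mathbf{P})$. Here $\sigma'(\mathbf{P})$ varies across the $d$ coordinates of each row; for ReLU it is a $0/1$ activation pattern. The exact result is therefore $\frac{1}{|V|}\mathbf{X}^\top\mathbf{A}\operatorname{diag}(\mathbf{g})\,\sigma'(\mathbf{A}\mathbf{X}\theta_0)$. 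This reduces to the stated form only when $\sigma'(\mathbf{A}\mathbf{X}\theta_0)$ is the all-ones matrix. That is precisely the hypothesis the paper adds: ReLU with all pre-activations at $\theta_0$ positive, so $\sigma'\approx\mathbf{1}_d$. You must assume this explicitly rather than claim the factor is absorbed.

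Your degeneracy observation is correct, and in fact stronger than you state, so your proposed remedies do not work. Because the projection is $\mathbf{1}_d\mathbf{1}_C^\top$, softmax cross-entropy gives $\nabla_{h_v}\ell=\mathbf{1}_d\,\mathbf{1}_C^\top(p_v-r_{y_v})=0$ for every value of $h_v$. Neither $\sigma'$ (which only multiplies this zero) nor biases in the encoder can make $g_v$ nonzero. Only a loss that is not invariant to a common shift of all logits gives $g_v\neq 0$, for example squared error on the logits. Relatedly, for any loss that is symmetric in the classes, $g_v$ evaluated at identical logits does not depend on $y_v$. Your closing claim that $\mathbf{g}$ carries the label information should therefore be qualified. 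Stating the lemma for a generic differentiable $\ell$, as you propose, is the right formulation; the paper's proof does not notice this issue.
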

\begin{proof}
Consider the forward pass of a one-layer GNN with weight matrix $\theta \in \mathbb{R}^{d_0 \times d}$:
\begin{equation}
\mathbf{H}=\sigma(\mathbf{A} \mathbf{X} \theta),
\label{eq:one_layer_gnn}
\end{equation}
where $\sigma$ is the nonlinear activation function (e.g., ReLU). The task loss over the graph is:
\begin{equation}
\mathcal{L}(\theta, G)=\frac{1}{|V|} \sum_{v \in V} \ell\left(h_v, y_v\right),
\end{equation}
where $h_v$ is the representation of $v$, which is the $v$-th row of $\mathbf{H}$, and $\ell$ is the node-level loss function (e.g., cross-entropy). Computing the gradient with respect to $\theta$ via the chain rule:
\begin{equation}
\nabla_\theta \mathcal{L}(\theta, G)=\frac{1}{|V|} \sum_{v \in V} \nabla_\theta h_v^{\top} \cdot \nabla_{h_v} \ell\left(h_v, y_v\right).
\end{equation}
For the gradient of $h_v$ w.r.t. $\theta$, it can be represented as:
\begin{equation}
\nabla_\theta h_v=\left(\sum_{u \in \mathcal{N}(v)} A_{v u} x_u\right)^{\top} \otimes \sigma^{\prime}\left((\mathbf{A} \mathbf{X} \theta)_v\right).
\end{equation}
At initialization $\theta_0$, assuming ReLU activation with appropriate initialization ensuring positive pre-activations, we have $\sigma^{\prime}\left(\left(\mathbf{A X} \theta_0\right)_v\right) \approx \mathbf{1}_d$. Therefore, the gradient can be represented as:
\begin{equation}
\nabla_\theta \mathcal{L}\left(\theta_0, G\right)=\frac{1}{|V|} \sum_{v \in V}\left(\sum_{u \in \mathcal{N}(v)} A_{v u} x_u\right)^{\top} \cdot g_v,
\end{equation}
where $g_v=\nabla_{h_v} \ell\left(h_v, y_v\right)$ is the gradient of the loss with respect to node $v$'s representation. This can be rewritten in matrix form as:
\begin{equation}
\nabla_\theta \mathcal{L}\left(\theta_0, G\right)=\frac{1}{|V|} \mathbf{X}^{\top} \mathbf{A}^{\top} \cdot \operatorname{diag}(\mathbf{g}) \cdot \mathbf{1}_d.    
\end{equation}
For undirected graphs with symmetric normalized adjacency matrices, $\mathbf{A}^{\top}=\mathbf{A}$, yielding:
\begin{equation}
\nabla_\theta \mathcal{L}\left(\theta_0, G\right)=\frac{1}{|V|} \mathbf{X}^{\top} \mathbf{A} \cdot \operatorname{diag}(\mathbf{g}) \cdot \mathbf{1}_d,
\end{equation}
which completes the proof.
\end{proof}
Building on the above definitions and lemmas, we now present the proof of \Cref{thm:de}.
\begin{proof}
Since the gradient fingerprint for $G_i$ sampled from domain $\mathcal{D}_i$, gradient fingerprint is defined as:
\begin{equation}
\Delta \theta_i=\theta_0-\eta \nabla_\theta \mathcal{L}_i\left(\theta_0, G_i\right).  
\end{equation}
For two graphs $G_i$ and $G_j$ from two domains, we have:
\begin{equation}
\Delta \theta_i-\Delta \theta_j=-\eta\left(\nabla_\theta \mathcal{L}_i\left(\theta_0, G_i\right)-\nabla_\theta \mathcal{L}_j\left(\theta_0, G_j\right)\right).
\end{equation}
Then the Frobenius norm can be represented as:
\begin{equation}
\left\|\Delta \theta_i-\Delta \theta_j\right\|_F=\eta\left\|\nabla_\theta \mathcal{L}_i\left(\theta_0, G_i\right)-\nabla_\theta \mathcal{L}_j\left(\theta_0, G_j\right)\right\|_F.
\label{eq:diff_delta}
\end{equation}
Based on the \Cref{lemma:fp_prop}, and assuming $|V_i| = |V_j| = n$ (we can pad with isolated nodes if necessary) for simplicity, the gradient difference becomes:
\begin{equation}
\nabla_\theta \mathcal{L}_i\left(\theta_0, G_i\right)-\nabla_\theta \mathcal{L}_j\left(\theta_0, G_j\right)=\frac{1}{n}\left[\mathbf{X}_i^{\top} \mathbf{A}_i \cdot \operatorname{diag}\left(\mathbf{g}_i\right)-\mathbf{X}_j^{\top} \mathbf{A}_j \cdot \operatorname{diag}\left(\mathbf{g}_j\right)\right] \cdot \mathbf{1}_d,
\label{eq:gradient_diff}
\end{equation}
where $\mathbf{g}_i = [g_{i,1}, g_{i,2}, \cdots, g_{i,n}]^\top \in \mathbb{R}^{n}$ is the vector of per-node loss gradient with $g_{i, v} = \nabla_{h_{i,v}} \ell\left(h_{i,v}, y_{i,v}\right)$ for node $v$ in $G_i$. Let $\mathbf{M}_i=\mathbf{X}_i^{\top} \mathbf{A}_i \cdot \operatorname{diag}\left(\mathbf{g}_i\right) \in \mathbb{R}^{d_o \times n}$ and $\mathbf{M}_j=\mathbf{X}_j^{\top} \mathbf{A}_j \cdot \operatorname{diag}\left(\mathbf{g}_j\right) \in \mathbb{R}^{d_o \times n}$, taking into \Cref{eq:gradient_diff} and computing the Frobenius norm, we have:
\begin{equation}
\left\|\nabla_\theta \mathcal{L}_i\left(\theta_0, G_i\right)-\nabla_\theta \mathcal{L}_j\left(\theta_0, G_j\right)\right\|_F=\frac{1}{n}\left\|\left(\mathbf{M}_i-\mathbf{M}_j\right) \cdot \mathbf{1}_d\right\|_2.  
\label{eq:begin_M_1}
\end{equation}
Since $\left\|\mathbf{1}_d\right\|_2=\sqrt{d}$, with the property of operator norm, the following inequality holds:
\begin{equation}
\left\|\left(\mathbf{M}_i-\mathbf{M}_j\right) \cdot \mathbf{1}_d\right\|_2 \leq\left\|\mathbf{M}_i-\mathbf{M}_j\right\|_{\text{op}} \cdot\left\|\mathbf{1}_d\right\|_2=\left\|\mathbf{M}_i-\mathbf{M}_j\right\|_{\text{op}} \cdot \sqrt{d},
\label{eq:begin_M_2}
\end{equation}
where $\|\cdot\|_{\text{op}}$ denotes the operator norm. To bound $\left\|\mathbf{M}_i-\mathbf{M}_j\right\|_{\text{op}}$, we decompose it by adding and subtracting intermediate terms as:
\begin{equation}
\begin{aligned}
\mathbf{M}_i - \mathbf{M}_j &= \mathbf{X}_i^\top \mathbf{A}_i \cdot \text{diag}(\mathbf{g}_i) - \mathbf{X}_j^\top \mathbf{A}_j \cdot \text{diag}(\mathbf{g}_j)\\
&= (\mathbf{X}_i - \mathbf{X}_j)^\top \mathbf{A}_i \cdot \text{diag}(\mathbf{g}_i) + \mathbf{X}_j^\top (\mathbf{A}_i - \mathbf{A}_j) \cdot \text{diag}(\mathbf{g}_i)\\
&\quad + \mathbf{X}_j^\top \mathbf{A}_j \cdot (\text{diag}(\mathbf{g}_i) - \text{diag}(\mathbf{g}_j)).
\end{aligned}
\label{eq:decompose_M}
\end{equation}
Taking the operator norm and using the triangle inequality on \Cref{eq:decompose_M}, we can bound the  $\left\|\mathbf{M}_i-\mathbf{M}_j\right\|_{\text{op}}$ with:
\begin{equation}
\begin{aligned}
\left\|\mathbf{M}_i-\mathbf{M}_j\right\|_{\text{op}} \leq & \left\|\left(\mathbf{X}_i-\mathbf{X}_j\right)^{\top}\right\|_{\text{op}}\left\|\mathbf{A}_i\right\|_{\text{op}}\left\|\operatorname{diag}\left(\mathbf{g}_i\right)\right\|_{\text{op}} \\
 &+\left\|\mathbf{X}_j^{\top}\right\|_{\text{op}}\left\|\left(\mathbf{A}_i-\mathbf{A}_j\right)\right\|_{\text{op}}\left\|\operatorname{diag}\left(\mathbf{g}_i\right)\right\|_{\text{op}} \\
 &+\left\|\mathbf{X}_j^{\top}\right\|_{\text{op}}\left\|\mathbf{A}_j\right\|_{\text{op}}\left\|\operatorname{diag}\left(\mathbf{g}_i\right)-\operatorname{diag}\left(\mathbf{g}_j\right)\right\|_{\text{op}}.
\end{aligned}
\label{eq:bound_M}
\end{equation}
For a diagonal matrix, the operator norm is the maximum absolute value of its diagonal entries. Therefore, for the diagonal matrix of the per-node gradient vector $\operatorname{diag}(\mathbf{g}_i)$ of graph $G_i$, its operator norm can be computed by $\left\|\operatorname{diag}\left(\mathbf{g}_i\right)\right\|_{\text{op}}=\max _{v \in V_i}\left|g_{i, v}\right|$. Based on the assumption in \Cref{thm:de} that the task loss $\mathcal{L}$ is $\mathscr{L}_{\text{task}}$-smooth with respect to model parameters, it means that its gradient is $\mathscr{L}_{\text{task}}$-Lipschitz continuous. Let the task loss $\mathcal{L}$ be a cross-entropy loss with $C_i$ classes, the node-level loss on node $v$ in $G_i$ is $\ell\left(h_{i,v}, y_v\right)=-\log \left(\frac{\exp \left(h_{i,v}^{\left(y_v\right)}\right)}{\sum_{c=1}^{C_i} \exp \left(h_{i,v}^{(c)}\right)}\right)$. The gradient on $v$ is $g_{i,v} = \nabla_{h_{i,v}} \ell (h_{i,v}, y_v) = p_v-r_{y_v}$, where $p_v = \mathrm{softmax}(h_{i,v}) \in [0,1]^{C_i}$ and $r_{y_v}$ is the one-hot encoding of the ground-truth label of $v$. Therefore, the following inequality holds:
\begin{equation}
\left\|g_{i, v}\right\|_2=\left\|p_v-r_{y_v}\right\|_2 \leq\left\|p_v\right\|_2+\left\|r_{y_v}\right\|_2 \leq \sqrt{C_i}+1.   
\end{equation}
Since we are considering $g_{i,v}$ as a scalar after projection, we have:
\begin{equation}
    \left|g_{i, v}\right| \leq \sqrt{C_i}+1=: \hat{C}_i
\end{equation}
Thus, we have $\left\|\operatorname{diag}\left(\mathbf{g}_i\right)\right\|_{\text{op}} \leq \hat{C}_i$. Based on the nature of the operator norm, we have $\left\|\left(\mathbf{X}_i-\mathbf{X}_j\right)^{\top}\right\|_{\text{op}}=\left\|\mathbf{X}_i-\mathbf{X}_j\right\|_{\text{op}} \leq\left\|\mathbf{X}_i-\mathbf{X}_j\right\|_F$, $\left\|\mathbf{X}_j^{\top}\right\|_{\text{op}}=\left\|\mathbf{X}_j\right\|_{\text{op}} \leq B$ (bounded by feature norms), and $\|\mathbf{A}_{i}\|_{\text{op}}, \|\mathbf{A}_{j}\|_{\text{op}} \leq 1$ where $\mathbf{A}$ is the normalized adjacency matrix, we get:
\begin{equation}
\left\|\mathbf{M}_i-\mathbf{M}_j\right\|_{\text{op}} \leq \hat{C}_i\left\|\mathbf{X}_i-\mathbf{X}_j\right\|_F+B \hat{C}_i\left\|\mathbf{A}_i-\mathbf{A}_j\right\|_F+B\left\|\operatorname{diag}\left(\mathbf{g}_i\right)-\operatorname{diag}\left(\mathbf{g}_j\right)\right\|_{\text{op}}.
\label{eq:rebound_M}
\end{equation}
For the last term of \Cref{eq:rebound_M}, since the task loss $\mathcal{L}$ is $\mathscr{L}_{\text{task}}$-smooth, the node-level loss $\ell$ inherits smoothness 
with Lipschitz gradient constant $\mathscr{L}_\ell$. Given node $v$ with the same index in $G_i$ and $G_j$, the gradient difference between $\mathbf{g}_i$ and $\mathbf{g}_j$ is $\|\text{diag}(\mathbf{g}_i) - \text{diag}(\mathbf{g}_j)\|_{\text{op}} = \max_v |g_{i,v} - g_{j,v}|$. Since $g_{i,v} = \nabla_{h_{i,v}}\ell(h_{i,v}, y_{i,v})$ and the gradient is Lipschitz, we have:
\begin{equation}
\|g_{i,v} - g_{j,v}\| \leq \mathscr{L}_\ell \|h_{i,v} - h_{j,v}\|.
\end{equation}
In the one-layer message passing defined in \Cref{eq:one_layer_gnn}, the activation function $\sigma(\cdot)$ is typically ReLU, which is 1-Lipschitz. Thus, we have:
\begin{equation}
\left\|h_{i, v}-h_{j, v}\right\| \leq\left\|\left[\left(\mathbf{A}_i \mathbf{X}_i-\mathbf{A}_j \mathbf{X}_j\right) \theta_0\right]_v\right\| \leq\left\|\left(\mathbf{A}_i \mathbf{X}_i-\mathbf{A}_j \mathbf{X}_j\right)_{v}\right\| \cdot\left\|\theta_0\right\|, 
\label{eq:bound_h}
\end{equation}
where the node $v$ in $\mathbf{A}_i \mathbf{X}_i-\mathbf{A}_j \mathbf{X}_j$ can be rewritten as $\left(\mathbf{A}_i \mathbf{X}_i\right)_{v}-\left(\mathbf{A}_j \mathbf{X}_j\right)_{v}=\sum_u A_{i, v u} x_{i, u}^T-\sum_u A_{j, v u} x_{j, u}^T$, which can be further bounded by:
\begin{equation}
\begin{aligned}
\|(\mathbf{A}_i\mathbf{X}_i - \mathbf{A}_j\mathbf{X}_j)_{v}\| &\leq \|(\mathbf{A}_i - \mathbf{A}_j)_{v}\mathbf{X}_i\| + \|(\mathbf{A}_j)_{v}(\mathbf{X}_i - \mathbf{X}_j)\| \\
&\leq \|(\mathbf{A}_i - \mathbf{A}_j)_{v}\| \|\mathbf{X}_i\|_{\text{op}} + \|(\mathbf{A}_j)_{v}\| \|\mathbf{X}_i - \mathbf{X}_j\|_{\text{op}}.
\end{aligned}
\label{eq:loose_bound_ax}
\end{equation}
Since $\mathbf{A}$ is normalized where $\|(\mathbf{A}_j)_v\|_1 = 1$ and thus $\|(\mathbf{A}_j)_v\|_2 \leq 1$. Taking \Cref{eq:loose_bound_ax} into \Cref{eq:bound_h}, we can bound the difference between $h_{i,v}$ and $h_{j,v}$ as:
\begin{equation}
\left\|h_{i, v}-h_{j, v}\right\| \leq\left\|\theta_0\right\| \cdot\left(\left\|\left(\mathbf{A}_i-\mathbf{A}_j\right)_{v}\right\| B+\left\|\mathbf{X}_i-\mathbf{X}_j\right\|_{\text{op}}\right).   
\end{equation}
To get a uniform bound over all nodes, we have:
\begin{equation}
\max _v\left\|h_{i, v}-h_{j, v}\right\| \leq\left\|\theta_0\right\| \cdot\left(\left\|\mathbf{A}_i-\mathbf{A}_j\right\|_F B+\left\|\mathbf{X}_i-\mathbf{X}_j\right\|_F\right),  
\end{equation}
which gives us:
\begin{equation}
\left\|\operatorname{diag}\left(\mathbf{g}_i\right)-\operatorname{diag}\left(\mathbf{g}_j\right)\right\|_{\text{op}} \leq \mathscr{L}_{\ell}\left\|\theta_0\right\| \cdot\left(B\left\|\mathbf{A}_i-\mathbf{A}_j\right\|_F+\left\|\mathbf{X}_i-\mathbf{X}_j\right\|_F\right).
\label{eq:up_bounded_g}
\end{equation}
Since $\mathscr{L}_{\ell}$, $\theta_0$, and $B$ are fixed, we can use $\mathscr{L}^\prime = \mathscr{L}_{\ell}\left\|\theta_0\right\| \max (B, 1)$ to combine them. There, \Cref{eq:up_bounded_g} can be rewritten as:
\begin{equation}
\left\|\operatorname{diag}\left(\mathbf{g}_i\right)-\operatorname{diag}\left(\mathbf{g}_j\right)\right\|_{\mathrm{op}} \leq \mathscr{L}^{\prime}\left(\left\|\mathbf{X}_i-\mathbf{X}_j\right\|_F+\left\|\mathbf{A}_i-\mathbf{A}_j\right\|_F\right).
\label{eq:new_bound_g}
\end{equation}
To measure the inherently distance between two graphs, we define the graph distance as $d_{\mathcal{G}}(G_i, G_j) = \|\mathbf{X}_i - \mathbf{X}_j\|_F + \|\mathbf{A}_i - \mathbf{A}_j\|_F$ as \Cref{def:graph_dis}. Then, taking the graph distance and \Cref{eq:new_bound_g} into \Cref{eq:rebound_M}, we have:
\begin{equation}
\left\|\mathbf{M}_i-\mathbf{M}_j\right\|_{\text{op}} \leq \widetilde{C} \cdot d_{\mathcal{G}}\left(G_i, G_j\right),  
\label{eq:new_bound_M}
\end{equation}
where $\widetilde{C} = \hat{C}_i+B \hat{C}_i+B \mathscr{L}^{\prime}$. Recall \Cref{eq:diff_delta}, \Cref{eq:begin_M_1}, and \Cref{eq:begin_M_2}, we can integrate the used constants, such as $\sqrt{d}$ and $\frac{1}{n}$, into $\widetilde{C}$. Then the difference between the gradient fingerprints of two domains is bounded as:
\begin{equation}
\left\|\Delta \theta_i-\Delta \theta_j\right\|_F \leq \eta \widetilde{C} \cdot d_{\mathcal{G}}\left(G_i, G_j\right).
\label{eq:delta_bound}
\end{equation}
For the graph distance $d_{\mathcal{G}}$, we have:
\begin{equation}
d_{\mathcal{G}}\left(G_i, G_j\right)=\left\|\mathbf{X}_i-\mathbf{X}_j\right\|_F+\left\|\mathbf{A}_i-\mathbf{A}_j\right\|_F \leq \sqrt{2} \sqrt{\left\|\mathbf{X}_i-\mathbf{X}_j\right\|_F^2+\left\|\mathbf{A}_i-\mathbf{A}_j\right\|_F^2}.    
\end{equation}
The Wasserstein distance can be rewritten as:
\begin{equation}
\mathcal{W}_2\left(\mathcal{D}_i, \mathcal{D}_j\right)=\inf _{\gamma \in \Gamma\left(P_i, P_j\right)} \sqrt{\mathbb{E}_{\left(G_i, G_j\right) \sim \kappa}\left[d_{\mathcal{G}}^2\left(G_i, G_j\right)\right]}    
\end{equation}
For the optimal coupling $\kappa^*$ that achieves the Wasserstein distance, by Jensen's inequality for the concave square root function, we have:
\begin{equation}
\mathbb{E}_{\left(G_i, G_j\right) \sim \kappa^*}\left[d_{\mathcal{G}}\left(G_i, G_j\right)\right] \leq \sqrt{\mathbb{E}_{\left(G_i, G_j\right) \sim \kappa^*}\left[d_{\mathcal{G}}^2\left(G_i, G_j\right)\right]}=\mathcal{W}_2\left(\mathcal{D}_i, \mathcal{D}_j\right).
\label{eq:wass_bound}
\end{equation}
Combining \Cref{eq:wass_bound} and \Cref{eq:delta_bound}, the bound of $\left\|\Delta \theta_i-\Delta \theta_j\right\|_F$ can be expressed as:
\begin{equation}
\left\|\Delta \theta_i-\Delta \theta_j\right\|_F \leq  \eta \widetilde{C} \cdot \mathbb{E}[d_{\mathcal{G}}(G_i, G_j)] \leq \eta \widetilde{C} \cdot \mathcal{W}_2(\mathcal{D}_i, \mathcal{D}_j).
\label{eq:final_bound_delta}
\end{equation}
\Cref{eq:final_bound_delta} shows that \textbf{the similarity between gradient fingerprints can effectively reflect the domain similarity}. 
Recall the domain embedder defined in \Cref{eq:domain_embedder}, we can conduct the Lipschitz analysis on it. First, the $\mathrm{Conv2D}$ with kernel weights $\mathbf{W}_{\text{conv}}$ satisfy $\|\operatorname{Conv2D}(x)-\operatorname{Conv2D}(y)\| \leq\left\|\mathbf{W}_{\text {conv}}\right\|_{op} \cdot \|x-y\|$, the $\mathrm{Flatten}$ operation preserves norms $\| \text {Flatten}(\mathbf{A})\left\|_2=\right\| \mathbf{A} \|_F$, and the $\mathrm{MLP}$ with $R$-layer weight matrices $\mathbf{W}_1 \cdots, \mathbf{W}_R$ satisfy $\|\operatorname{MLP}(x)-\operatorname{MLP}(y)\| \leq \prod_{r=1}^R\left\|\mathbf{W}_r\right\|_{\text{op}} \cdot\|x-y\|$. Thus, the overall Lipschitz constant of the domain embedder $f_{\phi_{\text{de}}}$ is:
\begin{equation}
\mathscr{L}_{\mathrm{de}}=\left\|\mathbf{W}_{\text {conv}}\right\|_{\text {op}} \cdot \prod_{r=1}^R\left\|\mathbf{W}_r\right\|_{\text {op}}.
\end{equation}
Thus, the following Lipschitz inequality holds:
\begin{equation}
\left\|e_i-e_j\right\|_2=\left\|f_{\mathrm{de}}\left(\Delta \theta_i\right)-f_{\mathrm{de}}\left(\Delta \theta_j\right)\right\|_2 \leq \mathscr{L}_{\mathrm{de}}\left\|\Delta \theta_i-\Delta \theta_j\right\|_F.   
\label{eq:bound_e}
\end{equation}
We can substitute \Cref{eq:final_bound_delta} into \Cref{eq:bound_e} and merge all constants into $\widetilde{C}$, we can get:
\begin{equation}
\left\|e_i-e_j\right\|_2 \leq   \widetilde{C} \cdot \mathcal{W}_2(\mathcal{D}_i, \mathcal{D}_j)   
\end{equation}
This shows that the domain embedding function preserves domain relationships: similar domains with small $\mathcal{W}_2$ map to nearby embeddings with small $\left\|e_i-e_j\right\|_2$.

\end{proof}

\subsection{Proof of \Cref{prop:lipschitz}} \label{app:proof_lipschitz}
\begin{proof}
By definition, $f_{\phi_{\text{feat}}}$ being $\mathscr{L}$-Lipschitz means that for all $e, e^\prime  \in \mathbb{R}^{d_e}$, 
\begin{equation}
\left\|f_{\phi_{\text {feat}}}(e)-f_{\phi_{\text {feat}}}\left(e^{\prime}\right)\right\|_{2,1} \leq \mathscr{L}\left\|e-e^{\prime}\right\|_2 .
\end{equation}
Applying this with $e = e_i$ and $e^\prime  = e_j$ for domain embeddings of $G_i$ and $G_j$, we have:
\begin{equation}
f_{\phi_{\text {feat}}}\left(e_i\right)-f_{\phi_{\text {feat}}}\left(e_j\right)=\left(\gamma^{\text {feat}}\left(e_i\right)-\gamma^{\text {feat}}\left(e_j\right), \beta^{\text {feat}}\left(e_i\right)-\beta^{\text {feat}}\left(e_j\right)\right)=(\Delta \gamma, \Delta \beta).
\end{equation}
Then using the definition of $\| \cdot \|_{2,1}$ gives:
\begin{equation}
\|\Delta \gamma\|_2+\|\Delta \beta\|_2=\left\|f_{\phi_{\text {feat}}}\left(e_i\right)-f_{\phi_{\text {feat}}}\left(e_j\right)\right\|_{2,1} \leq \mathscr{L}\left\|e_i-e_j\right\|_2,
\label{eq:prop_1_inq}
\end{equation}
which is exactly the claimed inequality in \Cref{prop:lipschitz}. For the graph $G_i$, the domain-conditioned transformation for its domain $i$ on an item $w$ is $\mathcal{K}^{\text{feat}}_i: \mathbb{R}^d \rightarrow \mathbb{R}^d$:
\begin{equation}
\mathcal{K}^{\text{feat}}_i = \gamma^{\text{feat}}_i \odot h +  \beta^{\text{feat}}_i.
\end{equation}
Then for any $h\in \mathbb{R}^d$, the following inequality holds:
\begin{equation}
\begin{aligned}
\left\|\mathcal{K}^{\text{feat}}_{i}(h)-T^{\text{feat}}_{j}(h)\right\|_2 & =\left\|\left(\gamma_i^{\text {feat}}-\gamma_j^{\text {feat}}\right) \odot h+\left(\beta_i^{\text {feat}}-\beta_j^{\text {feat}}\right)\right\|_2 \\
& \leq\left\|\left(\gamma_i^{\text {feat}}-\gamma_j^{\text {feat}}\right) \odot h\right\|_2+\left\|\beta_i^{\text {feat}}-\beta_j^{\text {feat}}\right\|_2 \\
& \leq\|h\|_{\infty}\left\|\gamma_i^{\text {feat}}-\gamma_j^{\text {feat}}\right\|_2+\left\|\beta_i^{\text {feat}}-\beta_j^{\text {feat}}\right\|_2 \\
& \leq\|h\|_2\left\|\gamma_i^{\text {feat}}-\gamma_j^{\text {feat}}\right\|_2+\left\|\beta_i^{\text {feat}}-\beta_j^{\text {feat}}\right\|_2,
\end{aligned}
\label{eq:K_inq}
\end{equation}
Combining \Cref{eq:prop_1_inq} and \Cref{eq:K_inq} yields:
\begin{equation}
\left\|\mathcal{K}^{\text{feat}}_{i}(h)-T^{\text{feat}}_{j}(h)\right\|_2 \leq \max \left\{\|h\|_2, 1\right\} \mathscr{L}\left\|e_i-e_j\right\|_2 .    
\end{equation}
Thus, as domains move closer in the embedding space, their induced feature transforms move closer uniformly on any set of bounded $\|h\|_2$, so the collections $\{\mathcal{K}^{\text{feat}}_{i}(h_{i,w})\}_w$ and $\{\mathcal{K}^{\text{feat}}_{j}(h_{j,w})\}_w$ occupy neighboring subspaces in the unified feature space.
\end{proof}

%% file: A03_Algo.tex
\section{Algorithms and Complexity Analysis}\label{app:algo}
\subsection{Algorithms}
\Cref{alg:pretrain} presents the episodic pretraining procedure for \methodname{}. \Cref{alg:test} details parameter-update-free in-context inference on unseen graphs. At a high level, pretraining teaches the model to (1) extract a gradient-fingerprint domain embedding from a small support set, (2) map that embedding to domain-conditioned feature and label transforms that place heterogeneous graphs in a shared space, and (3) perform prompt-aware matching between queries and aligned supports. At test time, we reuse the same pipeline with frozen parameters, computing the domain embedding and transforms on-the-fly from a few labeled examples only.

\begin{algorithm}[t]
\caption{MF-GIA Pretraining}
\label{alg:pretrain}
\DontPrintSemicolon
\SetKwInOut{Input}{Input}\SetKwInOut{Output}{Output}
\Input{
Pretraining graphs $\mathcal{G}=\{G_i=(V_i,E_i, \mathbf{X}_i,\mathbf{Y}_i)\}_{i=1}^M$; unified item dim $d_o$; embedding dim $d$; GNN encoder $f_\theta$ with stored initialization $\theta_0$ (kept frozen); base label table $\mathbf{E}^{\mathrm{label}}\in\mathbb{R}^{L_{\max}\times d}$;
Domain embedder $f_{\phi_{\text{de}}}$; FiLM aligners $f_{\phi_{\text{feat}}}, f_{\phi_{\text{label}}}$; DPAA params $W_K,W_V,W_Q$ and head $f_\Omega$;
Episode spec $(m$-way, $k$-shot, $T$ queries$)$, temperature $\tau$, learning rate $\eta$.
}
\Output{Pretrained model $\mathcal{M}_\Phi=\{\theta_0, f_{\phi_{\text{de}}}, f_{\phi_{\text{feat}}}, f_{\phi_{\text{label}}}, W_K,W_V,W_Q,f_\Omega\}$.}

\BlankLine
\textbf{(Optional) feature unification.} For each $\mathbf{X}_i$, map to $\mathbb{R}^{d_o}$ via SVD. \tcp*{unify dims}

\BlankLine
\textbf{Stage A: Train domain embedder $f_{\phi_{\text{de}}}$.}\tcp*{gradient fingerprints}
\For{$i=1,\dots,M$}{
  Compute one gradient step on $G_i$ from $\theta_0$: $\theta_i \leftarrow \theta_0 - \eta \nabla_\theta \mathcal{L}_i(\theta_0)$.\;
  Store fingerprint $\Delta\theta_i \leftarrow \theta_i-\theta_0$.\;
}
\Repeat{converged}{
  $e_i \leftarrow f_{\phi_{\text{de}}}(\Delta\theta_i)$ for all $i$.\;
  $\mathcal{L}_{\mathrm{de}} \leftarrow \sum_{i,j} \big(\|\Delta\theta_i-\Delta\theta_j\|_F - \|e_i-e_j\|_2\big)^2$.\;
  Update $f_{\phi_{\text{de}}}$ by descending $\nabla \mathcal{L}_{\mathrm{de}}$.\;
}
Freeze $f_{\phi_{\text{de}}}$ (and keep $f_{\theta}$ at $\theta_0$).\;

\BlankLine
\textbf{Stage B: Episodic pretraining with DPAA (encoder init frozen).}

\For{episodes}{
  Sample a graph $G_i$ and an $m$-way $k$-shot support set $\mathcal{S}$ plus $T$ queries per class as the query set $\mathcal{Q}$ .\;
  $e_i \leftarrow f_{\phi_{\text{de}}}(\Delta\theta_i)$; \
  $(\gamma^{\text{feat}}_i,\beta^{\text{feat}}_i)\leftarrow f_{\phi_{\text{feat}}}(e_i)$; \
  $(\gamma^{\text{label}}_i,\beta^{\text{label}}_i)\leftarrow f_{\phi_{\text{label}}}(e_i)$.\;
  \tcp{Aligned features / labels}
  For any item $w$, $h_{i,w}\leftarrow f_{\theta_0}(w,G_i)$, \
  $z_{i,w}\leftarrow \gamma^{\text{feat}}_i\odot h_{i,w}+\beta^{\text{feat}}_i$.\;
  For each class $l$ used in the episode, $u_{i,l}\leftarrow \gamma^{\text{label}}_i\odot E^{\mathrm{label}}_l+\beta^{\text{label}}_i$.\;
  Form $\mathbf{Z}^{\text{pmt}}\in\mathbb{R}^{(mk)\times d}$ from support $\{z_{i,w}\}$ and $\mathbf{U}^{\text{pmt}}\in\mathbb{R}^{m\times d}$ from $\{u_{i,l}\}$.\;
  \tcp{Dual Prompt-Aware Attention (single-query attention)}
  \For{each query item $q$ with class $c$}{
    $\mathbf{K}^{\text{feat}}\!\leftarrow \mathbf{Z}^{\text{pmt}}\mathbf{W}_K,\ \mathbf{V}^{\text{feat}}\!\leftarrow \mathbf{Z}^{\text{pmt}}\mathbf{W}_V,\ \mathbf{Q}^{\text{feat}}\!\leftarrow z_{i,q}\mathbf{W}_Q$.\;
    $z^{\text{out}}_{i,q}\!\leftarrow \mathrm{softmax}\!\left(\frac{\mathbf{Q}^{\text{feat}} \mathbf{K}^{\text{feat}\top}}{\sqrt d}\right)\mathbf{V}^{\text{feat}}$.\;
    $\mathbf{K}^{\text{label}}\!\leftarrow \mathbf{U}^{\text{pmt}}\mathbf{W}_K,\ \mathbf{V}^{\text{label}}\!\leftarrow \mathbf{U}^{\text{pmt}}\mathbf{W}_V,\ \mathbf{Q}^{\text{label}}\!\leftarrow f_\Omega(z^{\text{out}}_{i,q})$.\;
    $u^{\text{out}}_{i,q}\!\leftarrow \mathrm{softmax}\!\left(\frac{\mathbf{Q}^{\text{label}} \mathbf{K}^{\text{label}\top}}{\sqrt d}\right)\mathbf{V}^{\text{label}}$.\;
    $s_{i,q}\!\leftarrow u^{\text{out}}_{i,q} (\mathbf{U}^{\text{pmt}})^\top \in \mathbb{R}^m$; \
    $\mathcal{L}_{\text{episode}}\! \leftarrow -\log \frac{\exp(s_{i,q}[c]/\tau)}{\sum_{j=1}^{m}\exp(s_{i,q}[j]/\tau)}$.\;
  }
  Update $\{f_{\phi_{\text{feat}}}, f_{\phi_{\text{label}}}, E^{\mathrm{label}}, \mathbf{W}_K,\mathbf{W}_V,\mathbf{W}_Q,f_\Omega\}$ to minimize the mean query loss $\mathcal{L}_{\text{pretrain}}$ of the episode.\;
}
\end{algorithm}

\begin{algorithm}[ht]
\caption{MF-GIA Test-time In-context Inference (Parameter-Update-Free w.r.t.\ $\mathcal{M}_\Phi$)}
\label{alg:test}
\DontPrintSemicolon
\SetKwInOut{Input}{Input}\SetKwInOut{Output}{Output}
\Input{
Frozen pretrained $\Phi$ from \Cref{alg:pretrain}; unseen graph $G_{\mathrm{new}}=(V_{\mathrm{new}},E_{\mathrm{new}},\mathbf{X}_{\mathrm{new}})$ with $C_{\mathrm{new}}$ classes;\\
$C_{\mathrm{new}}$-way $k$-shot support set $\mathcal{S}=\{(w_j,y_j)\}_{j=1}^{k C_{\mathrm{new}}}$; queries $\mathcal{Q}\subseteq G_{\mathrm{new}}\setminus S$.
}
\Output{Predictions $\{\hat y_q\}_{q\in Q}$.}

\BlankLine

\tcp{In-context domain embedding (from support only)}
Compute a one-step fingerprint from $\theta_0$ on $\mathcal{S}$: $\theta_{\mathrm{new}}\leftarrow \theta_0-\eta \nabla_\theta \mathcal{L}_{\mathrm{new}}(\theta_0; \mathcal{S})$; \
$e_{\mathrm{new}}\leftarrow f_{\phi_{\text{de}}}(\theta_{\mathrm{new}}-\theta_0)$.\;

\BlankLine
\tcp{Domain-conditioned alignment for $G_{\mathrm{new}}$}
$(\gamma^{\text{feat}}_{\mathrm{new}},\beta^{\text{feat}}_{\mathrm{new}})\leftarrow f_{\phi_{\text{feat}}}(e_{\mathrm{new}})$; \
$(\gamma^{\text{label}}_{\mathrm{new}},\beta^{\text{label}}_{\mathrm{new}})\leftarrow f_{\phi_{\text{label}}}(e_{\mathrm{new}})$.\;
For any item $w$: $h_{\mathrm{new},w}\leftarrow f_{\theta_0}(w,G_{\mathrm{new}})$, \
$z_{\mathrm{new},w}\leftarrow \gamma^{\text{feat}}_{\mathrm{new}}\odot h_{\mathrm{new},w}+\beta^{\text{feat}}_{\mathrm{new}}$.\;
For $l=0,\dots,C_{\mathrm{new}}-1$: $u_{\mathrm{new},l}\leftarrow \gamma^{\text{label}}_{\mathrm{new}}\odot \mathbf{E}^{\mathrm{label}}_l+\beta^{\text{label}}_{\mathrm{new}}$.\;
Form $\mathbf{Z}^{\text{pmt}}_{\mathrm{new}}\in\mathbb{R}^{(k C_{\mathrm{new}})\times d}$ from $\{z_{\mathrm{new},w}\}_{(w,y)\in S}$ and $\mathbf{U}^{\text{pmt}}_{\mathrm{new}}\in\mathbb{R}^{C_{\mathrm{new}}\times d}$ from $\{u_{\mathrm{new},l}\}$.\;

\BlankLine
\tcp{Dual Prompt-Aware Attention inference (no parameter update)}
\For{each query $q\in Q$}{
  $\mathbf{K}^{\text{feat}}\!\leftarrow \mathbf{Z}^{\text{pmt}}_{\mathrm{new}}\mathbf{W}_K,\ \mathbf{V}^{\text{feat}}\!\leftarrow \mathbf{Z}^{\text{pmt}}_{\mathrm{new}}\mathbf{W}_V,\ \mathbf{Q}^{\text{feat}}\!\leftarrow z_{\mathrm{new},q}\mathbf{W}_Q$.\;
  $z^{\text{out}}_{\mathrm{new},q}\!\leftarrow \mathrm{softmax}\!\left(\frac{\mathbf{Q}^{\text{feat}} \mathbf{K}^{\text{feat}\top}}{\sqrt d}\right)\mathbf{V}^{\text{feat}}$.\;
  $\mathbf{K}^{\text{label}}\!\leftarrow \mathbf{U}^{\text{pmt}}_{\mathrm{new}}\mathbf{W}_K,\ \mathbf{V}^{\text{label}}\!\leftarrow \mathbf{U}^{\text{pmt}}_{\mathrm{new}}\mathbf{W}_V,\ \mathbf{Q}^{\text{label}}\!\leftarrow f_\Omega(z^{\text{out}}_{\mathrm{new},q})$.\;
  $u^{\text{out}}_{\mathrm{new},q}\!\leftarrow \mathrm{softmax}\!\left(\frac{\mathbf{Q}^{\text{label}} \mathbf{K}^{\text{label}\top}}{\sqrt d}\right)\mathbf{V}^{\text{label}}$.\;
  $s_q \leftarrow u^{\text{out}}_{\mathrm{new},q}(\mathbf{U}^{\text{pmt}}_{\mathrm{new}})^\top$;\quad $\hat y_q \leftarrow \arg\max_j s_q[j]$.\;
}
\end{algorithm}

\subsection{Complexity Analysis}
Let $G_i=(V_i,E_i,\mathbf{X}i,\mathbf{Y}i)$ be a pretraining graph with $|V_i|$ nodes and $|E_i|$ edges, GNN encoder width $d$, input width $d_o$, and domain embedding width $d_e$. Offline, the domain embedder computes a single gradient fingerprint per graph by one forward–backward through the shared $J$-layer GNN $f_\theta$ from $\theta_0$, which costs $O(J(|E_i|+|V_i|),d)$ per $G_i$. Each fingerprint $\Delta \theta_i \in \mathbb{R}^{d_o \times d}$ is then embedded via $f_{\phi_{\text{de}}}$, giving $O\left(d_e d_o d\right)$ time. After caching $\Delta \theta_i$, training $f_{\phi_{\text{de}}}$ with the pairwise metric-preserving loss adds $O\left(M^2 d_e\right)$ per epoch for $M$ pretraining graphs. In each episode on $G_i$ with $m$-way $k$-shot support and $T$ queries per way, generating FiLM parameters for domain-conditioned transformations $\left(\gamma_i^{\text {feat}}, \beta_i^{\text {feat}}\right)=f_{\phi_{\text {feat}}}\left(e_i\right)$ and $\left(\gamma_i^{\text {label}}, \beta_i^{\text {label}}\right)=f_{\phi_{\text {label}}}\left(e_i\right)$ costs $O\left(d_e d\right)$, and applying $\mathcal{K}_i^{\text {feat}}$ to $(m k+m T)$ item embeddings and $\mathcal{K}_i^{\text {label}}$ to $m$ label rows of the shared base $\mathbf{E}^{\text {label}} \in \mathbb{R}^{L_{\max } \times d}$ costs $O((m k+m T)d+ md)$. The time complexity of DPAA is $O\left((m k+m) d^2+m T\left(d^2+m k d+m d\right)\right)$, which is typically secondary to the encoder when $m$ and $k$ are few-shot. Typically, we have $M, J, T, k,m \ll |V_i|, |E_i|, d, d_o, d_e$, therefore the overall time complexity can be represented as $O\left(\sum_{i=1}^M\left(\left|E_i\right|+\left|V_i\right|\right) d+d_e d_o d\right)$, which is linear in graph size.

%% file: A04_TaskUnify.tex
\section{Task Unification}  \label{app:tu}
With a bit of notation abuse, in knowledge graphs FB15K237 and WN18RR, link classification aims to predict the relation type $r$ for a given triple $(h, r, t)$, where $h$ and $t$ are head and tail entities, respectively. To leverage our node classification framework for this task, we transform the link classification problem into node classification through line graph construction. Given a knowledge graph $G = (V, E, R, \mathbf{X})$ with entities $V$, edges $E$, relation types $R$, and node feature matrix $\mathbf{X}$, we construct a line graph $\mathrm{LG}(G) = (V_{\mathrm{LG}}, E_{\mathrm{LG}}, \mathbf{X}_{\mathrm{LG}})$, where each edge $\varepsilon =  (h, r, t) \in E$ becomes a node $v_{\varepsilon} \in V_{\mathrm{LG}}$. Two nodes $v_{\varepsilon_i}, v_{\varepsilon_j} \in V_{\mathrm{LG}}$ and are connected if the corresponding edges $\varepsilon_i$, $\varepsilon_j$ share a common entity (head or tail). Formally, the edge set $E_{\mathrm{LG}}$ is defined as:
\begin{equation}
E_{\mathrm{LG}}=\left\{\left(v_{\varepsilon_i}, v_{\varepsilon_j}\right): \varepsilon_i=\left(h_i, r_i, t_i\right), \varepsilon_j=\left(h_j, r_j, t_j\right),\left\{h_i, t_i\right\} \cap\left\{h_j, t_j\right\} \neq \emptyset\right\}    
\end{equation}
For each node $v_{\varepsilon} \in V_{\mathrm{LG}}$ in the line graph corresponding to edge $\varepsilon = (h, r, t)$, we construct features by aggregating the embeddings of the connected entities and relation as $x_{v_e} = [x_h \| x_t]$, where $[\cdot \| \cdot]$ denotes concatenation. The concatenated features are then projected to the unified dimension using PCA to maintain consistency with the node classification framework. After transformation, link classification becomes a node classification problem on the line graph, where each node (representing an edge in the original graph) needs to be classified into one of $R$ relation types. 

%% file: A05_ExpDetails.tex
\section{Experimental Details}\label{app:exp_det}

\subsection{Datasets} \label{app:datasets}
\begin{table}[t]
  \centering
  \caption{Dataset statistics.}
  \label{tab:dataset_stats}
  \begin{adjustbox}{width=\linewidth}
  \begin{tabular}{l l l l r r r}
    \toprule
    \textbf{Usage} & \textbf{Dataset} & \textbf{Domain} & \textbf{Task} & \textbf{\#Nodes} & \textbf{\#Edges} & \textbf{\# Classes} \\
    \midrule
    \multirow[t]{3}{*}{\textbf{Pretrain}} 
      & WikiCS         & Web link  & Node         & 11{,}701  & 216{,}123   & 10  \\
      & PubMed         & Citation  & Node         & 19{,}717  & 44{,}338    & 3   \\
      & ogbn-Arxiv          & Citation  & Node         & 169{,}343 & 1{,}166{,}243 & 40 \\
      & Amazon-ratings & E-commerce (Ratings) & Node & 24{,}492 & 93{,}050  &  5       \\
    \midrule
    \multirow[t]{6}{*}{\textbf{Evaluation}} 
      & Cora   & Citation  & Node         & 11{,}701  & 216{,}123   & 10  \\
      & ogbn-Products & E-commerce (Product Category) & Node & 2{,}449{,}029 & 61{,}859{,}140 & 47 \\
      & Computers    & E-commerce (Product Category) & Node & 13{,}752 & 491{,}722 & 10\\
      & Physics & Co-authorship & Node & 34{,}493 & 495{,}924 & 5\\
      & BlogCatalog & Social Media & Node & 5{,}196 & 343{,}486 & 6  \\ 
      & FB15K237 & Encyclopedic KG & Link         & 14{,}541  & 310{,}116   & 237 \\
      & WN18RR   & Lexical KG & Link         & 40{,}943  & 93{,}003    & 11  \\
    \bottomrule
  \end{tabular}
  \end{adjustbox}
\end{table}

We pretrain \methodname{} on four source graphs: WikiCS~\citep{mernyei2020wiki}, PubMed~\citep{yang2016revisiting}, ogbn-Arxiv~\citep{hu2020open}, and Amazon-ratings~\citep{leskovec2016snap, platonov2023a}. The pretrained model is evaluated on five held-out graphs on node-level tasks: Cora~\citep{yang2016revisiting}, ogbn-Products~\citep{hu2020open}, Computers~\citep{shchur2018pitfalls}, Physics~\citep{shchur2018pitfalls}, and BlogCatalog~\citep{yang2023pane}, spanning citation, e-commerce, co-authorship, and social media domains. The pretrained model is also evaluated on edge-level tasks on FB15K237~\citep{bordes2013translating} and WN18RR~\citep{dettmers2018convolutional}, which are knowledge graphs from encyclopedic and lexical domains to predict relation types. Dataset statistics are summarized in \Cref{tab:dataset_stats}. Note that although Amazon-ratings, ogbn-Products, and Computers are E-commerce networks, they form distinct domains: Amazon-ratings is labeled by average user rating per item, whereas ogbn-Products and Computers use product-category labels. We therefore treat them as separate domains.

\subsection{Baseline Configurations}
We compare \methodname{} against two categories of baselines: (1) Traditional GNNs: GCN~\citep{kipf2017semisupervised}, GAT~\citep{velickovic2018graph}, and GraphSAGE~\citep{hamilton2017inductive}; (2) Self-supervised GNNs: GraphMAE~\citep{hou2022graphmae}, DGI~\citep{velickovic2018deep}, and GraphCL~\citep{you2020graph}; (3) GFM with post-training: GCOPE~\citep{zhao2024all}, GFT~\citep{wang2024gft}, AutoGFM~\citep{chen2025autogfm}, GPF~\citep{fang2023universal}, and All in One~\citep{sun2023all}; (4) GFM with ICL: Prodigy~\citep{huang2023prodigy}, OFA~\citep{liu2024one}, and GraphAlign~\citep{hou2024graphalign}. 

For traditional GNNs and Self-Supervised Methods, we pretrain on the same four datasets as our \methodname{}. Since these models lack in-context learning capabilities, we fine-tune them on the support set and evaluate on the query set for each episode. For GFMs with post-training (no ICL), we consider two pretraining regimes and report the stronger results: (1) pretraining on the same four graphs as \methodname{}, and (2) pretraining on the datasets used by the methods' official implementations. For Prodigy, we compare two variants: one pretrained on our datasets and another on MAG240M~\citep{hu2021ogblsc} as in the original work, reporting the better result. For modality-dependent models (OFA, GraphAlign), which require text-attributed graphs (TAGs) and cannot operate on pre-encoded features, we use their original TAG datasets and implementations. To ensure fairness, all baselines follow the same episode protocol (identical $m$-way, $k$-shot support/query splits), use comparable backbones when applicable, and tune hyperparameters on validation episodes within the authors’ recommended ranges.

\begin{figure}[t]
    \centering
    \begin{subfigure}[t]{0.48\textwidth}
        \centering
        \includegraphics[width=\linewidth]{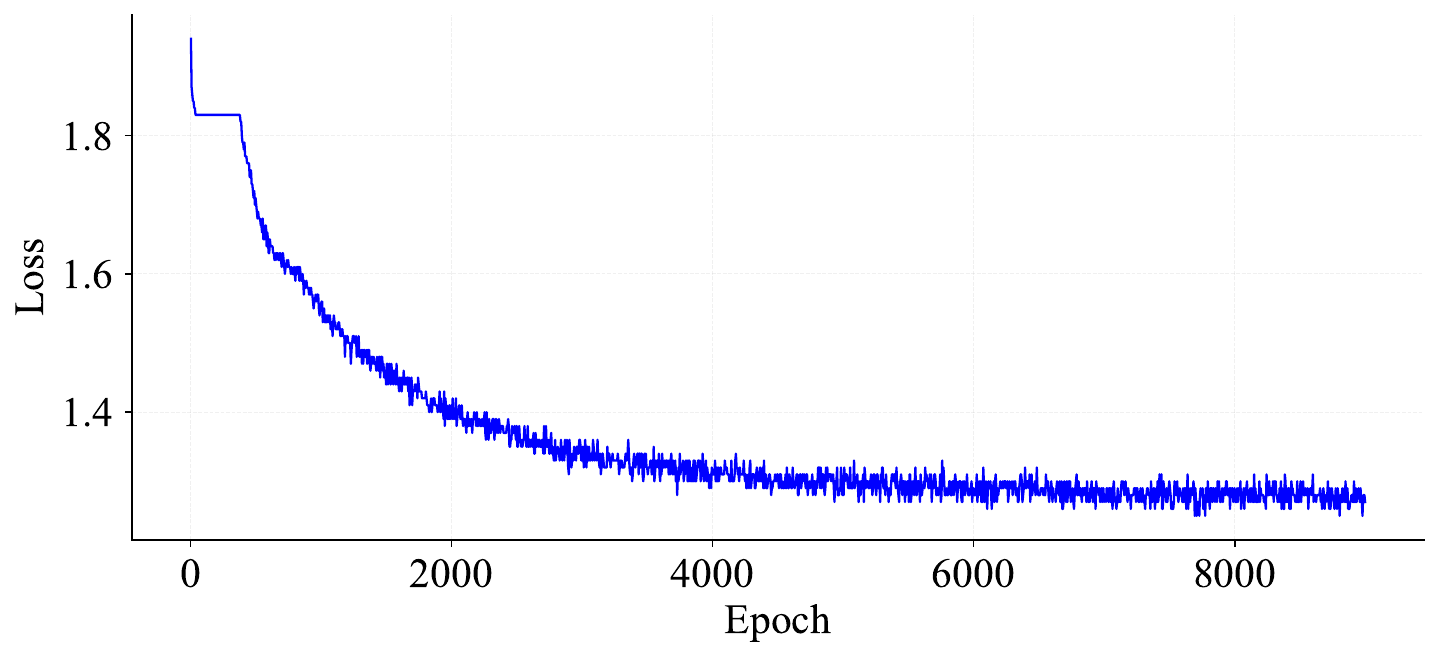}
        \caption{Training loss.}
        \label{fig:1a}
    \end{subfigure}\hfill
    \begin{subfigure}[t]{0.48\textwidth}
        \centering
        \includegraphics[width=\linewidth]{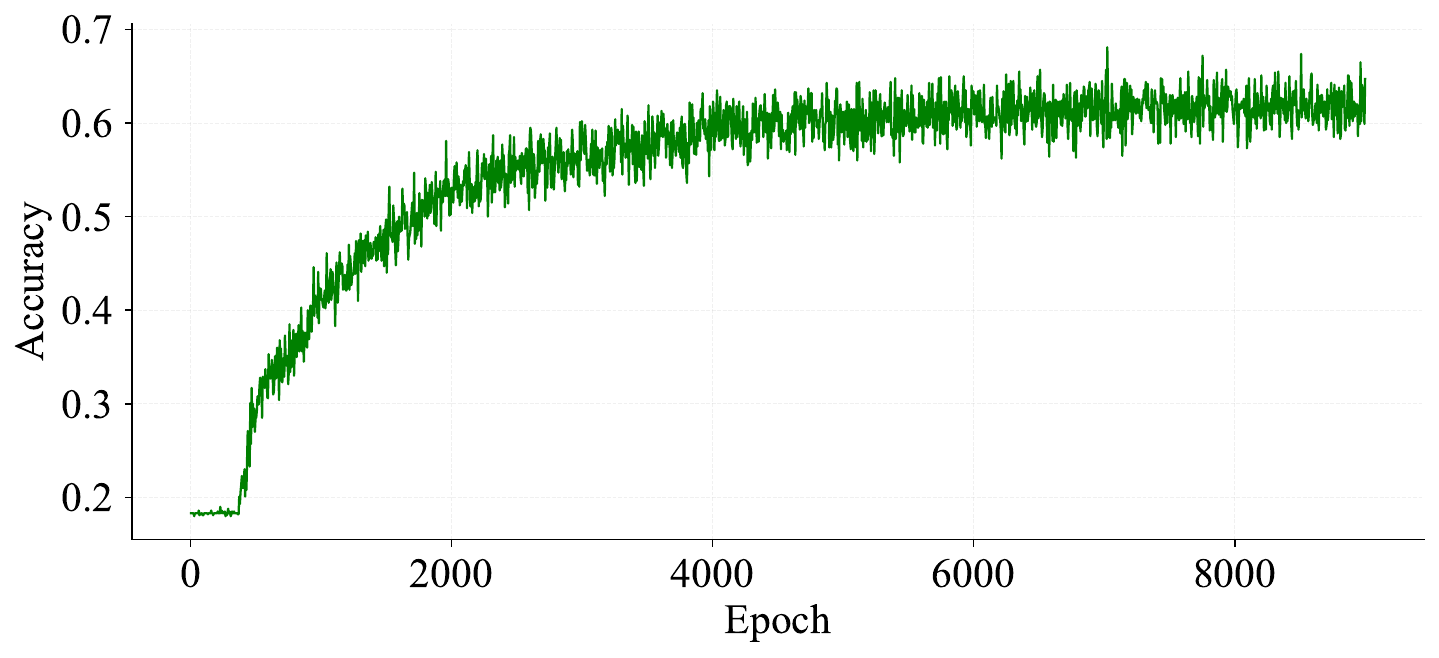}
        \caption{Validation accuracy.}
        \label{fig:1b}
    \end{subfigure}
    \caption{Pretraining curves of \methodname{}.}
    \label{fig:pretraining_curve}
\end{figure}

\subsection{Implementation}\label{app:implementation} 
We evaluate \methodname{} under the few-shot learning paradigm without any fine-tuning. For each test graph, we randomly sample $k$-shot support sets, where $k=\{1,3,5\}$, and evaluate on the remaining nodes. For node classification tasks, we measure classification accuracy and report the mean performance across 10 independent trials, each with randomly sampled support/query splits to ensure robustness of our results. For edge classification tasks, we focus on relation type prediction and conduct evaluation over 20 episodes to account for variance in the few-shot sampling process.

For the dimension alignment, we use SVD to unify all graphs' feature dimensions to $d_o = 64$. The domain embedder uses a 2-layer CNN followed by a 1-layer MLP to project gradient fingerprints into $64$-dimensional domain embeddings. For feature alignment, we also employ a 1-layer GNN encoder with a hidden dimension $64$, followed by FiLM-based transformations. The label alignment uses a shared label base of dimension $L_{\max} \times 64$. The DPAA mechanism consists of 1 attention layer with 1 head each. During pretraining, we use episodic meta learning with $10$-way $5$-shot tasks sampled from the pretraining graphs. We train for 10000 episodes using AdamW~\citep{loshchilov2017decoupled} with learning rate $0.005$ and weight decay $0.0005$.  For gradient fingerprint computation, we use a fixed learning rate $\eta = 0.01$ for single-step updates. \Cref{fig:pretraining_curve} illustrates the pretraining dynamics of \methodname{}. The training loss exhibits stable convergence, while validation accuracy shows consistent improvement before plateauing at approximately 6000 epochs, indicating effective model convergence.

%% file: A06_MoreExp.tex
\section{More Experiments}\label{app:more_exp}

\input{Tables/ablation_table}
\subsection{Effect of Domain Diversity in Pretraining}
\Cref{tab:pretrain_ablation} demonstrates that domain diversity is critical for \methodname{} to construct an effective domain embedding space that enables robust alignment~\citep{zhuo2022efficient} of downstream graphs. Single-dataset pretraining yields limited performance, where Arxiv shows the strongest individual results due to its diverse academic content, providing richer domain signals. When combining two datasets, cross-domain pairs (e.g., Arxiv + Amazon: 48.69\%) perform comparably to same-domain pairs (PubMed + Arxiv: 48.57\%) despite Amazon's weaker individual performance, indicating that structural diversity helps the domain embedder learn more generalizable alignment patterns beyond surface-level similarities. This effect amplifies with three datasets, where the configuration without PubMed (Arxiv + WikiCS + Amazon) achieves 56.96\%, notably outperforming the configuration without Arxiv (51.56\%), despite Arxiv's superior standalone performance. This counterintuitive result reveals that once sufficient domain signals are captured, maintaining diverse structural patterns (Web link, E-commerce) becomes more valuable than redundant citation networks for constructing a comprehensive domain space. The full four-dataset model achieves optimal performance (60.73\%), with remarkable generalization to entirely unseen domains like BlogCatalog (67.31\%), validating that comprehensive domain coverage during pretraining enables the domain embedder to map novel graphs into appropriate subspaces of the learned domain space. These results confirm that \methodname{}'s domain-conditioned transformations require diverse pretraining to establish a rich domain embedding space where graphs from any domain, seen or unseen, can be effectively mapped and aligned based on their intrinsic characteristics.

\subsection{Effect of Graph Pre-encoder}
\begin{wraptable}{r}{0.50\textwidth}
\vspace{-0.7cm}
\caption{Performance of \methodname{} on Cora with different feature encodings.}
\label{tab:modality_free}
\centering
\small
\begin{adjustbox}{width=1\linewidth}
\begin{tabular}{lccc}
\toprule
Pre-encoder & 1-shot & 3-shot & 5-shot \\
\midrule
BoW & 47.64 & 57.38 & 63.98 \\
ST & 48.37 & 62.79 & 68.54   \\
RoBERTa & 48.24 & 61.53  & 69.85  \\
LLaMa2-7B & 47.93  &  58.64 & 65.33 \\
\bottomrule
\end{tabular}
\end{adjustbox}
\vspace{-0.3cm}
\end{wraptable}
While the raw text data of the Cora dataset has been made available by \citet{chen2024exploring}, popular end-to-end GNN models typically utilize the pre-encoded version from \citet{yang2016revisiting}, which employs bag-of-words (BoW) encoding for node features. Modality-dependent GFMs such as GFT, AutoGFM, and OFA require access to raw text data and rely on specific language models, such as Sentence Transformer (ST)~\footnote{\url{https://huggingface.co/sentence-transformers/multi-qa-distilbert-cos-v1}}~\citep{reimers2019sentence}, LLaMa2-7B~\footnote{\url{https://huggingface.co/meta-llama/Llama-2-7b}}~\citep{touvron2023llama}, or RoBERTa~\footnote{\url{https://huggingface.co/FacebookAI/roberta-base}}~\citep{liu2019roberta}, to align the semantic space between the Cora dataset and their pretraining graphs. In contrast, \methodname{} demonstrates modality freedom, i.e., it operates directly on graphs with arbitrary pre-encoded features without requiring knowledge of the encoding method. Whether a graph has been encoded using bag-of-words, advanced language models, or even unknown proprietary encoders, \methodname{} can seamlessly adapt to these features. The results in \Cref{tab:icl_node} employ the public bag-of-words features for \methodname{}. To validate our modality-free claim, in \Cref{tab:modality_free}, we conduct additional experiments applying our pretrained model (without re-pretraining) to the graph pre-encoded through different pipelines. The results show that \methodname{} maintains its effectiveness regardless of the underlying feature encoding, confirming its ability to generalize across diverse pre-processing methods and enabling practical deployment in scenarios where encoding details are unknown or heterogeneous.

\subsection{Dimension Unification}
For GFMs, feature dimensions are unified before feeding data into the model because GFMs require inputs in a consistent format. We adopt SVD-based unification to achieve a fair comparison with baselines which isolates the contribution of our core innovation from preprocessing effects. Dedicated methods such as Domain-Invariant Aligner (DIA)~\citep{yuan2025how} provide more expressive and learnable pre-unifiers, which can be seamlessly integrated into our \methodname{} framework. \Cref{tab:dim_uni} shows that our framework remains effective and compatible with advanced dimension unification techniques.

\begin{table}[ht]
\caption{Performance on \methodname{} with expressive dimension unification component.}
\centering
\vspace{-0.1in}
\label{tab:dim_uni}
\begin{tabular}{lcc|cc|cc}
\toprule
\multirow{2}{*}{Method} 
& \multicolumn{2}{c}{Computers} 
& \multicolumn{2}{c}{Physics} 
& \multicolumn{2}{c}{BlogCatalog} \\
\cmidrule(lr){2-3} \cmidrule(lr){4-5} \cmidrule(lr){6-7}
& 1-shot & 5-shot & 1-shot & 5-shot & 1-shot & 5-shot \\
\midrule
\methodname{} &41.49 & 53.71 &79.12 & 88.92 &49.46 & 67.31\\
\methodname{} (w. DIA) &41.96 &54.60 & 80.15 & 89.74 & 48.37 & 66.53  \\
\bottomrule
\end{tabular}
\end{table}

\subsection{More Pretraining Tasks}
\begin{wraptable}{r}{0.50\textwidth}
\vspace{-0.7cm}
\caption{Performance of \methodname{} with additional pretraining tasks (5-shot).}
\label{tab:more_task}
\centering
\small
\begin{adjustbox}{width=1\linewidth}
\begin{tabular}{lccc}
\toprule
 & ogbn-Products & Computers & FB15K237 \\
\midrule
\methodname{} & 22.61 & 53.71 & 91.38 \\
\methodname{} (w. link) & 24.59 & 56.60 & 92.36   \\
\bottomrule
\end{tabular}
\end{adjustbox}
\vspace{-0.3cm}
\end{wraptable}
In this work, we pretrain our model on a small number of datasets to showcase its ability to adapt to diverse unseen domains and task types. The four datasets we use are among the widely adopted, publicly available, and easily accessible graph benchmarks. This design ensures that anyone can reproduce or extend our pretraining process without the need for extensive dataset curation, as common benchmarks already provide sufficient diversity to pretrain our model effectively. Here, we expand pretraining beyond node classification by adding link existence prediction and re-pretrain the model, denoted as \methodname{} (w. link), and report the inference results under the 5-shot setting in \Cref{tab:more_task}. It shows that a broader pretraining corpus yields consistent improvements on unseen domains.

\subsection{Stability of Gradient Fingerprints}
\begin{wraptable}{r}{0.50\textwidth}
\vspace{-0.7cm}
\caption{Stability of gradient fingerprints.}
\label{tab:stability}
\centering
\small
\begin{adjustbox}{width=1\linewidth}
\begin{tabular}{lccc}
\toprule
 Datasets & Avg. Cosine Similarity & Std. Dev. & Min. Similarity \\
\midrule
Physics & 0.931 & 0.015 & 0.896 \\
BlogCatalog & 0.917 & 0.021 & 0.873   \\
\bottomrule
\end{tabular}
\end{adjustbox}
\vspace{-0.3cm}
\end{wraptable}
The stability of the gradient fingerprints is dependent on the size of the support set. From \Cref{lemma:fp_prop}, the gradient decomposes as $\nabla_0 \mathcal{L}\left(\theta_0, G\right)=\frac{1}{|V|} \mathbf{X}^{\top} \mathbf{A} \cdot \operatorname{diag}(\mathbf{g}) \cdot \mathbf{1}_d$, which reveals that the fingerprint aggregates information from domain-specific components including feature distribution $\mathbf{X}$, graph structure $\mathbf{A}$ and label distribution $\mathbf{g}$. As the support set size increases, the law of large numbers ensures that these aggregated statistics converge to their population expectations, making the gradient fingerprints increasingly stable. Here, we examine the fingerprint stability under a 5-shot setting. Take Physics and BlogCatalog datasets as examples, we randomly sample 20 different 5-shot support sets from all classes, and compute fingerprints $\Delta \theta$ for each support set and feed it into the domain embedder to get the domain embedding. Then we can measure the stability by computing the pairwise cosine similarity between all domain embedding pairs from the same graph. The results are shown in \Cref{tab:stability}, where the high average cosine similarity and low standard deviation demonstrate that different support sets from the same domain produce highly consistent domain embeddings. 

\subsection{Sensitivity Analysis}
\paragraph{Sensitivity of the Temperature $\tau$.} The temperature parameter $\tau$ in \Cref{eq:epi_loss_w_tau} controls the sharpness of the softmax distribution over class predictions. We conduct systematic experiments varying $\tau \in \{0.05, 0.2, 0.5, 1\}$ across multiple datasets in \Cref{tab:eff_tau}. We can find that the default $\tau = 0.2$ provides optimal balance, but the model is not highly sensitive to this hyperparameter within a reasonable range $[0.2,1]$. However, as a foundation model, all trainable parameters and hyperparameters must remain fixed across all datasets and tasks. Thus, we set $\tau = 0.2$ uniformly for all experiments both during pretraining and in-context inference without any task- and dataset-specific tuning. 

\begin{table}[t]
\centering
\caption{Effect of $\tau$ on different datasets (5-shot).}
\label{tab:eff_tau}
\begin{tabular}{lccc}
\toprule
$\tau$ & ogbn-Products& Computers & FB15K237 \\
\midrule
0.05        & 21.45 & 52.10 & 91.82 \\
0.2 (Default) & 22.61 & 53.71 & 91.38 \\
0.5         & 22.60 & 53.93 & 90.45 \\
1           & 22.15 & 53.29 & 90.27 \\
\bottomrule
\end{tabular}
\end{table}

\paragraph{Sensitivity of DPAA Configurations} For the DPAA depth and number of heads, we directly adopt a single-layer and single-head design for complexity and generalization considerations. Adding more layers and heads would increase computational and memory costs and could introduce a higher risk of overfitting, whereas our goal is to keep the learning process efficient, lightweight, yet expressive. The results \Cref{tab:icl_node} and \Cref{tab:icl_link} already show that this simplest DPAA configuration outperforms baselines, which highlights the architectural strength of our approach rather than relying on heavy over-parameterization. \Cref{tab:dpaa_lh} shows that the 1-layer 1-head and 2-layer 1-head settings achieve the best performance, with 1-layer 1-head offering the best trade-off between efficiency and effectiveness. Therefore, we adopt the 1-layer 1-head configuration as the unified default setting for all datasets in our experiments. Moreover, we conduct an ablation study comparing shared and separate weight matrices $W_K$ and $W_V$ in this table. The shared setting outperforms the separate setting, possibly because the latter introduces more parameters and is more prone to over-fitting, and shared parameters can effectively improve efficiency.

\begin{table}[t]
\centering
\caption{Sensitivity of DPAA configurations}
\label{tab:dpaa_lh}
\begin{tabular}{lccc}
\toprule
 & Computers & Physics & BlogCatalog \\
\midrule
1-layer, 1-head (Shared) & 53.71 & 88.92 & 67.31 \\
1-layer, 1-head (Separated) & 51.25 & 88.10 & 67.05 \\
1-layer, 4-head & 52.47 & 86.58 & 66.15 \\
2-layer, 1-head & 54.05 & 88.53 & 67.46 \\
2-layer, 4-head & 53.39 & 87.52 & 66.83 \\
\bottomrule
\end{tabular}
\end{table}

%% file: Tables/ablation_table.tex
\begin{table}[t]
\centering
\caption{Effect of pretraining dataset composition on few-shot node classification accuracy (\%). We systematically vary domain coverage from single to full four-domain pretraining. Results are 5-shot accuracy averaged over 20 episodes. Best results are \textbf{bold}.}
\label{tab:pretrain_ablation}
\vspace{-0.1in}
\resizebox{\textwidth}{!}{%
\begin{tabular}{llccccccc}
\toprule
\multirow{2}{*}{\textbf{Configuration}} & \multirow{2}{*}{\textbf{Pretraining Datasets}} & \multirow{2}{*}{\textbf{Domain Coverage}} & \textbf{In-Domain} & \multicolumn{3}{c}{\textbf{Out-of-Domain}} & \multirow{2}{*}{\textbf{Avg OOD}} & \multirow{2}{*}{\textbf{Overall}} \\
\cmidrule(lr){4-4} \cmidrule(lr){5-7}
& & & Cora & Products & Physics & BlogCatalog & & \\
\midrule
\multirow{4}{*}{\rotatebox[origin=c]{90}{\textbf{Single}}} 
& PubMed & Citation & 42.76 & 12.43 & 68.15 & 41.28 & 40.62 & 40.91 \\
& Arxiv & Citation & 48.31 & 13.86 & 76.42 & 43.67 & 44.65 & 45.57 \\
& WikiCS & Web & 39.23 & 14.72 & 65.83 & 45.91 & 42.15 & 41.67 \\
& Amazon & E-commerce & 35.47 & 17.28 & 61.26 & 38.74 & 39.09 & 38.19 \\
\midrule
\multirow{4}{*}{\rotatebox[origin=c]{90}{\textbf{Two}}} 
& PubMed + Arxiv & Citation & 52.14 & 14.92 & 79.87 & 47.35 & 47.38 & 48.57 \\
& WikiCS + Amazon & Web + E-commerce & 41.68 & 18.35 & 67.42 & 48.26 & 44.68 & 43.93 \\
& PubMed + WikiCS & Citation + Web & 47.92 & 15.76 & 73.21 & 49.84 & 46.27 & 46.68 \\
& Arxiv + Amazon & Citation + E-commerce & 50.36 & 19.14 & 78.53 & 46.72 & 48.13 & 48.69 \\
\midrule
\multirow{4}{*}{\rotatebox[origin=c]{90}{\textbf{Three}}} 
& PubMed + Arxiv + WikiCS & w/o E-commerce & 56.47 & 16.82 & 83.74 & 54.38 & 51.65 & 52.85 \\
& PubMed + Arxiv + Amazon & w/o Web & 57.82 & 20.67 & 85.91 & 52.16 & 52.91 & 54.14 \\
& PubMed + WikiCS + Amazon & w/o Citation (Arxiv) & 51.36 & 19.43 & 77.62 & 57.83 & 51.63 & 51.56 \\
& Arxiv + WikiCS + Amazon & w/o Citation (PubMed) & 59.24 & 21.38 & 86.73 & 60.47 & 56.19 & 56.96 \\
\midrule
\rotatebox[origin=c]{90}{\textbf{Full}} 
& All Four Datasets & Complete & \textbf{63.98} & \textbf{22.61} & \textbf{88.92} & \textbf{67.31} & \textbf{59.61} & \textbf{60.73} \\
\bottomrule
\end{tabular}%
}
\vspace{-0.15in}
\end{table}

%% file: A07_LLMUsage.tex
\section{The Use of Large Language Models (LLMs)} \label{app:llm}
In this paper, LLMs have been used solely for polishing the writing and identifying typographical errors. No LLMs were used for generating research content, analysis, or conclusions.